\author{%
  Song Wang\\
  %Department of Electrical and Computer Engineering\\
  University of Virginia\\
  %Pittsburgh, PA 15213 \\
  \texttt{sw3wv@virginia.edu} \\
  % examples of more authors
  \And
  Chen Chen \\
  University of Virginia \\
  %Address \\
  \texttt{zrh6du@virginia.edu} \\
  \AND
  Jundong Li \\
  University of Virginia \\
  %Address \\
  \texttt{jundong@virginia.edu} \\
  % \And
  % Coauthor \\
  % Affiliation \\
  % Address \\
  % \texttt{email} \\
  % \And
  % Coauthor \\
  % Affiliation \\
  % Address \\
  % \texttt{email} \\
}
\newtheorem{theorem}{Theorem}[section]
\newtheorem{lemma}[theorem]{Lemma}
\newtheorem{definition}{Definition}
\newcommand{\bX}{\mathbf{X}}
\newcommand{\bA}{\mathbf{A}}
\newcommand{\bW}{\mathbf{W}}
\newcommand{\bh}{\mathbf{h}}
\newcommand{\bx}{\mathbf{x}}
\newcolumntype{L}[1]{>{\raggedright\let\newline\\\arraybackslash\hspace{0pt}}m{#1}}
\newcolumntype{C}[1]{>{\centering\let\newline  \\\arraybackslash\hspace{0pt}}m{#1}}
\newcolumntype{R}[1]{>{\raggedleft\let\newline \\\arraybackslash\hspace{0pt}}m{#1}}
\DeclareMathOperator*{\argmin}{argmin}
\newenvironment{customthm}[1]
  {\innercustomthm}
  {\endinnercustomthm}
\title{Graph Few-shot Learning with\\ Task-specific Structures}
\begin{document}

\maketitle

\begin{abstract}
Graph few-shot learning is of great importance among various graph learning tasks. Under the few-shot scenario, models are often required to conduct classification given limited labeled samples. Existing graph few-shot learning methods typically leverage Graph Neural Networks (GNNs) and perform classification across a series of meta-tasks. Nevertheless, these methods generally rely on the original graph (i.e., the graph that the meta-task is sampled from) to learn node representations. Consequently, the graph structure used in each meta-task is identical. Since the class sets are different across meta-tasks, node representations should be learned in a task-specific manner to promote classification performance. Therefore, to adaptively learn node representations across meta-tasks, we propose a novel framework that learns a task-specific structure for each meta-task.
%%%Specifically, our goal is to learn optimal task-specific structures for each meta-task. As a result, the learned node representations can be task-specific for each meta-task. 
To handle the variety of nodes across meta-tasks, we extract relevant nodes and learn task-specific structures based on node influence and mutual information. In this way, we can learn node representations with the task-specific structure tailored for each meta-task. We further conduct extensive experiments on five node classification datasets under both single- and multiple-graph settings to validate the superiority of our framework over the state-of-the-art baselines. Our code is provided at https://github.com/SongW-SW/GLITTER.

\end{abstract}

\section{Introduction}
Nowadays, graph-structured data is widely used in various real-world applications, such as molecular property prediction~\cite{huang2020graph}, knowledge graph completion~\cite{zhang2020few}, and recommender systems~\cite{ying2018graph}. More recently, Graph Neural Networks (GNNs)~\cite{xu2018powerful,xu2018representation,velivckovic2017graph,hamilton2017representation} have been proposed to learn node representations via information aggregation based on the given graph structure. Generally, these methods adopt a semi-supervised learning strategy to train models on a graph with abundant labeled samples~\cite{kipf2017semi}. However, in practice, it is often difficult to obtain sufficient labeled samples due to the laborious labeling process~\cite{ding2020graph}. Hence, there is a surge of research interests aiming at performing graph learning with limited labeled samples as references, known as \emph{graph few-shot learning}~\cite{zhou2019meta,liu2021relative,ding2020inductive}.
%%%Due to the lack of sufficient labeled samples, traditional GNNs typically render suboptimal performance in the few-shot scenario~\cite{wang21AMM}. 

Among various types of graph few-shot learning tasks, \emph{few-shot node classification} is essential in real-world scenarios, including protein classification~\cite{borgwardt2005protein} and document categorization~\cite{tang2008arnetminer}. To deal with the label deficiency issue in node classification, many recent works~\cite{chauhan2020few,ding2020graph,huang2020graph,liu2021relative}
%%%%To solve this problem, several recent works have been proposed to tackle the label deficiency issue on graphs~\cite{chauhan2020few,ding2020graph,huang2020graph,liu2021relative}. These methods generally 
incorporate existing few-shot learning frameworks from other domains~\cite{vinyals2016matching,snell2017prototypical} into GNNs. 
Specifically, few-shot classification during evaluation is conducted on a specific number of meta-test tasks. Each meta-test task contains a small number of labeled nodes as references (i.e., support nodes) and several unlabeled nodes for classification (i.e., query nodes). To extract transferable knowledge from classes with abundant labeled nodes, the model is trained on a series of meta-training tasks that are sampled from these disjoint classes but share similar structures with meta-test tasks.  We refer to meta-training and meta-test tasks as meta-tasks.
%%%these methods adopt the prevalent episodic learning strategy, which is widely used in other domains~\cite{finn2017model}. 
Note that few-shot node classification can be conducted on a single graph (e.g., a citation network for author classification) or across multiple graphs (e.g., a set of protein-protein interaction networks for protein property predictions). Here each meta-task is sampled from one single graph in both single-graph and multiple-graph settings, since each meta-test task is conducted on one graph.
%During this process, the crucial part is to extract generalizable meta-knowledge so that the model can effectively handle unseen classes during training. 
Despite the success of recent studies on few-shot node classification, they mainly learn node representations from the original graph (i.e., the graph that the meta-task is sampled from). However, the original graph can be redundant and uninformative for a specific meta-task as each meta-task only contains a small number of nodes. As a result, the learned node representations are not tailored for the meta-task (i.e., task-specific), which increases the difficulties of few-shot learning. Thus, instead of leveraging the same original graph for all meta-tasks, it is crucial to learn a task-specific structure for each meta-task.
%%%%(2) The learned meta-knowledge cannot be easily transferred across graphs. Most existing methods only focus on the single graph setting, where the meta-knowledge is only learned from one graph and can lead to unsatisfactory performance on novel graphs. Therefore, it is crucial to learn task-specific structures for each meta-task instead of original structures.
%%On the other hand, despite the success of recent studies on graph few-shot learning, these methods typically focus on few-shot learning problems on a single graph. In this case, the learned meta-knowledge is dedicated for the trained graph and thus is suitable for meta-test tasks on the same graph. However, in the real-world scenario, labels can be scattered across multiple graphs, which will significantly reduce the effectiveness of existing graph few-shot learning models~\cite{huang2020graph}. For example,

Intuitively, the task-specific structure should contain nodes in the meta-task along with other relevant nodes from the original graph. Moreover, the edge weights among these nodes should also be learned in a task-specific manner. Nevertheless, it remains a daunting problem to learn a task-specific structure for each meta-task due to two challenges:
%%considerable performance on cross-graph few-shot learning. 
%%%%The main challenges are two-folds: 
(1) It is non-trivial to select relevant nodes for the task-specific structure. Particularly, this structure should contain nodes that are maximally relevant to the support nodes in the meta-task. Nevertheless, since each meta-task consists of multiple support nodes, it is difficult to select nodes that are relevant to the entire support node set. 
%%%As a result, it is difficult to capture the long-distance relevance and select suitable nodes.
%%%select nodes that are relevant to all the support nodes in a specific meta-task.
%%the model needs to learn meta-knowledge from the entire graph, which can be extremely inefficient. 
%%Moreover, since nodes in each meta-task can be scattered across the entire graph, learning from the entire graph will result in ineffective information propagation among nodes in this meta-task and thus hinder the classification performance. 
%%Thus, it is critical to define a suitable computation graph for each meta-task. Nevertheless, due to the variety of graphs in real-world datasets, it is non-trivial to properly select the nodes to be included in the computation graph for each meta-task. 
(2) It is challenging to learn edge weights for the task-specific structure. The task-specific structure should maintain strong correlations for nodes in the same class, so that the learned node representations will be similar. Nonetheless,
%%The original structures can be redundant for the meta-task, since these structures contain information for classes not belonging to the meta-task. On the other hand, w
%%%With a suitable computation graph structure, we can comprehensively learn node representations that are suitable for classification in the meta-task, i.e., task-specific. 
the support nodes in the same class could be distributed across the original graph, which increases the difficulty of enhancing such correlations for the task-specific structure learning.
%%%%As a result, it is challenging to properly enhance such correlations to learn the optimal structure.
%%%However, since the original structures can redundant, it remains challenging to obtain optimal structures that preserve the useful original structures while considering node features.
%%(2) How to learn and utilize useful meta-knowledge for a meta-task. In cross-graph few-shot learning, the model is forced to handle meta-tasks that appear on a novel graph unseen during training. Unlike existing methods where learned meta-knowledge can be accumulated on one single graph, cross-graph few-shot learning requires extracting meta-knowledge from a novel graph. Therefore, it becomes crucial to effectively extract meta-knowledge regarding each meta-task while maximally ignoring irrelevant information. However, it remains difficult to extract meta-knowledge from a novel graph due to its unseen nodes and edges.

To address these challenges, we propose a novel \underline{\textbf{G}}raph few-shot \underline{\textbf{L}}earning framework w\underline{\textbf{IT}}h \underline{\textbf{T}}ask-sp\underline{\textbf{E}}cific st\underline{\textbf{R}}uctures - GLITTER, which aims at effectively learning a task-specific structure for each meta-task in graph few-shot learning. Specifically, to reduce the irrelevant information from the original graph, we propose to select nodes via two strategies according to their overall node influence on support nodes in each meta-task. 
%%construct a computation graph for each task, which allows for efficient information propagation within a GNN. 
%%To fully preserve useful graph structures for the computation graph, we adopt a locality-based sampling strategy that considers graph structural information to ensure effective propagation. 
%%Moreover, to promote the extraction of meta-knowledge in this computation graph, we leverage the concept of node influence to maximally enhance the connections among support nodes in the same class. 
Moreover, we learn edge weights in the task-specific structure based on node influence within classes and mutual information between query nodes and labels. 
%%%%Specifically, we maximize the node influence within classes and the mutual information between support and query nodes.
%%In addition, we also maximize the mutual information between query nodes and support nodes from the perspective of node influence. 
With the learned task-specific structures, our framework can effectively learn node representations that are tailored for each meta-task. In summary, the main contributions of our framework are as follows: (1) We selectively extract relevant nodes from the original graph and learn a task-specific structure for each meta-task based on node influence and mutual information. (2) The proposed framework can handle graph few-shot learning under both single-graph and multiple-graph settings. 
%%%%and the multiple graph setting, where the variety of meta-task is significantly larger. 
Differently, most existing works only focus on the single-graph setting. (3) We conduct extensive experiments on five real-world datasets under single-graph and multiple-graph settings. The superior performance over the state-of-the-art methods further validates the effectiveness of our framework.

\section{Problem Formulation}
Denote the set of input graphs as $\mathcal{G}=\{G_1,\dotsc,G_M\}$ (for the single-graph setting, $|\mathcal{G}|=1$), where $M$ is the number of graphs. Here each graph can be represented as $G=(\mathcal{V},\mathcal{E},\bX)$, where $\mathcal{V}$ is the set of nodes, $\mathcal{E}$ is the set of edges, and $\mathbf{X}\in\mathbb{R}^{|\mathcal{V}|\times d}$ is a feature matrix with the $i$-th row vector ($d$-dimensional) representing the attribute of the $i$-th node. Under the prevalent meta-learning framework, the training process is conducted on a series of meta-training tasks $\{\mathcal{T}_1,\dotsc,\mathcal{T}_T\}$, where $T$ is the number of meta-training tasks. More specifically, $\mathcal{T}_i=\{\mathcal{S}_i,\mathcal{Q}_i\}$, where $\mathcal{S}_i$ is the \emph{support set} of $\mathcal{T}_i$ and consists of $K$ labeled nodes for each of $N$ classes (i.e., $|\mathcal{S}_i|=NK$). The corresponding label set of $\mathcal{T}_i$ is $\mathcal{Y}_i$, where $|\mathcal{Y}_i|=N$. $\mathcal{Y}_i$ is sampled from the whole training label set $\mathcal{Y}_{train}$. With $\mathcal{S}_i$ as references, the model is required to classify nodes in the \emph{query set} $\mathcal{Q}_i$, which contains $Q$ unlabeled samples. Note that the actual labels of query nodes are from $\mathcal{Y}_i$. After training, the model will be evaluated on a series of meta-test tasks, which follow a similar setting as meta-training tasks, except that the label set in each meta-test task is sampled from a distinct label set $\mathcal{Y}_{test}$ (i.e., $\mathcal{Y}_{test} \cap \mathcal{Y}_{train}=\emptyset$). It is noteworthy that under the multiple-graph setting, meta-training and meta-test tasks can be sampled from different graphs, while each meta-task is sampled from one single graph.
%%%%Therefore, the discrepancy of graph structures between training and test will greatly reduce the effectiveness of learned meta-knowledge.

			\begin{figure*}[htbp]
	    \centering
	    \includegraphics[width=0.99\textwidth]{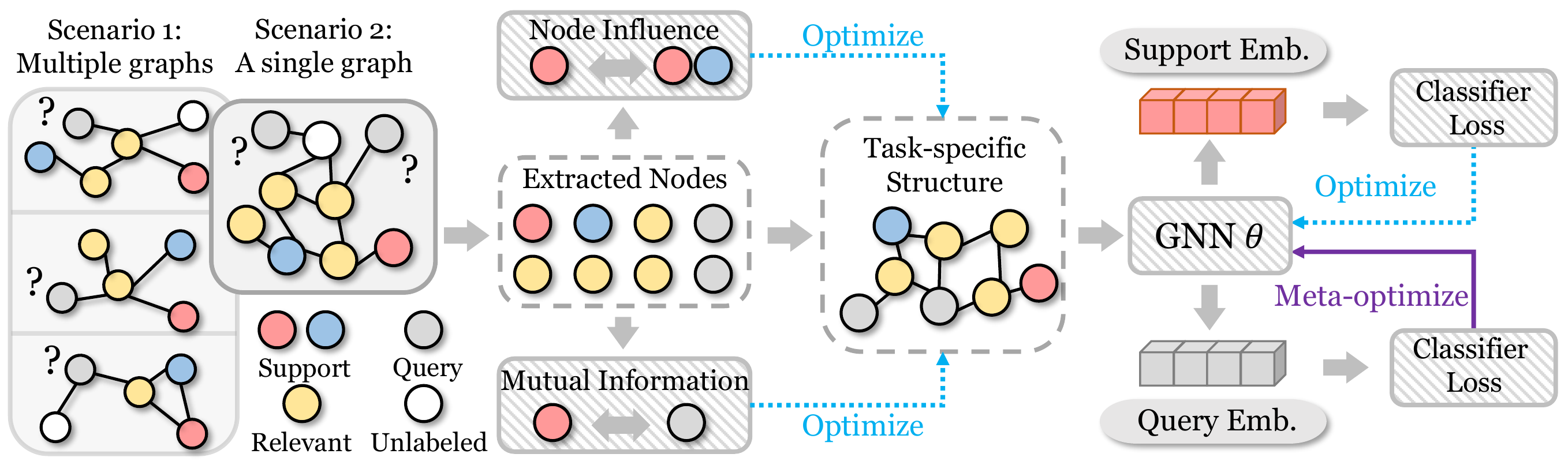}
	    \vspace{-0.05in}
\caption{The overall framework of GLITTER. We first extract relevant nodes based on two strategies: local sampling and common sampling. Then we learn the task-specific structure with the extracted nodes along with support and query nodes based on node influence and mutual information. The learned structure will be used to generate node representations with a GNN. We further classify support nodes with a classifier, and the classification loss is used to optimize the GNN and the classifier. Finally, we meta-optimize the GNN and the classifier with the loss on query nodes.
}
\label{fig:illustration}
\vspace{-0.12in}
	\end{figure*}

\section{Methodology}
In this section, we introduce our framework that explores task-specific structures for different meta-tasks in graph few-shot learning. The detailed framework is illustrated in Figure~\ref{fig:illustration}. We first elaborate on the process of selecting relevant nodes based on node influence to construct the task-specific structure in each meta-task.
%%which enables effective node representation learning in novel graphs. 
Then we provide the detailed process of learning task-specific edge weights via maximizing node influence within classes and mutual information between query nodes and labels. Finally, we describe the meta-learning strategy used to optimize model parameters.
%%Finally, we leverage the concept of mutual information to utilize available query nodes for model learning based on node influence.

\subsection{Selecting Nodes for Task-specific Structures}
Given a meta-task $\mathcal{T}=\{\mathcal{S},\mathcal{Q}\}$, we first aim to extract relevant nodes that are helpful for $\mathcal{T}$ and construct a task-specific structure $G_\mathcal{T}$ based on these nodes. In this way, we can reduce the impact of redundant information on the original graph and focus on meta-task $\mathcal{T}$.
%%since other information on this graph can be harmful for classification.
Nevertheless, it remains difficult to determine which nodes are useful for classification in $\mathcal{T}$. The reason is that the support nodes in $\mathcal{T}$ can be distributed across the original graph, which increases the difficulty of selecting nodes that are relevant to all these support nodes. Thus, we propose to leverage the concept of node influence to select relevant nodes. Here we first define node influence based on \cite{huang2020graph,xu2018representation,wang2020unifying} as follows:
\begin{definition}[Node Influence] The node influence from node $v_i$ to node $v_j$ is defined as $I(v_i,v_j)=\|\partial \bh_i/\partial \bh_j\|$, where $\bh_i$ and $\bh_j$ are the output representations of $v_i$ and $v_j$ in a GNN, respectively. $\partial \bh_i/\partial \bh_j$ is a Jacobian matrix, and the norm can be any specific subordinate norm.
\label{def}
\end{definition}
According to Definition~\ref{def}, large node influence denotes that the representation of a node can be easily impacted by another node, thus rendering stronger correlations. Intuitively, we need to incorporate more nodes with large influence on the support nodes into $G_\mathcal{T}$. In this way, $G_\mathcal{T}$ can maintain the most crucial information that is useful for classification based on support nodes. To effectively select nodes with larger influence on the support nodes, we consider important factors that affect node influence. The following theorem provides a universal pattern for node influence on support nodes: 
\begin{theorem} Consider the node influence from node $v_k$ to the $i$-th class (i.e., $C_i$) in a meta-task $\mathcal{T}$. Denote the geometric mean of the node influence values to all support nodes in $C_i$ as $I_{C_i}(v_k)=\sqrt[K]{\prod_{j=1}^KI(v_k,s_{i,j})}$, where $s_{i,j}$ is the $j$-th support node in $C_i$. Assume the node degrees are randomly distributed with the mean value as $\bar{d}$. Then, $\mathbb{E}(\log I_{C_i}(v_k))\geq-\log \bar{d}\cdot\sum_{j=1}^K\text{SPD}(v_k,s_{i,j})/K$, where $\text{SPD}(v_k,s_{i,j})$ denotes the shortest path distance between $v_k$ and $s_{i,j}$.

\label{theorem_distance}
\end{theorem}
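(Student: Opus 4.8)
The plan is to express the node influence $I(v_k, s_{i,j})$ in terms of the GNN's layer-wise aggregation and then bound it below using a path-counting argument. First I would recall the standard fact (following \cite{xu2018representation,wang2020unifying}) that for an $L$-layer message-passing GNN with symmetric normalization, the expected Jacobian $\partial \bh_k / \partial \bh_j$ (in expectation over the nonlinearities, treated as random $\{0,1\}$ activations in the ReLU case) is proportional to a weighted sum over walks of length $L$ from $v_k$ to $v_j$, with each walk contributing a product of inverse-degree normalization factors. The key reduction is that $\log I(v_k, s_{i,j})$ is controlled from below by the contribution of a single shortest walk (the walk realizing $\text{SPD}(v_k, s_{i,j})$, padded with self-loops if $L$ exceeds the distance), whose weight is (up to constants absorbed into the GNN weight matrices, which I would normalize away or treat as the unit scale) the product of $1/d_v$ over the vertices $v$ traversed, i.e. roughly $\prod 1/d_v$ over $\text{SPD}(v_k,s_{i,j})$ hops.

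Next I would take logarithms: along that shortest walk, $\log I(v_k,s_{i,j}) \geq -\sum_{t} \log d_{v_t}$ where the sum runs over the $\text{SPD}(v_k,s_{i,j})$ intermediate vertices. Now take expectations over the random degree distribution. By Jensen's inequality applied to the concave function $\log$, $\mathbb{E}(\log d_v) \leq \log \mathbb{E}(d_v) = \log \bar d$, so $\mathbb{E}(-\log d_{v_t}) \geq -\log \bar d$, giving $\mathbb{E}(\log I(v_k, s_{i,j})) \geq -\log\bar d \cdot \text{SPD}(v_k, s_{i,j})$. Finally, averaging the geometric-mean definition: $\log I_{C_i}(v_k) = \frac{1}{K}\sum_{j=1}^K \log I(v_k, s_{i,j})$, so taking expectations and summing the per-term bounds yields $\mathbb{E}(\log I_{C_i}(v_k)) \geq -\log \bar d \cdot \frac{1}{K}\sum_{j=1}^K \text{SPD}(v_k, s_{i,j})$, which is exactly the claim.

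I expect the main obstacle to be making the first step rigorous: the Jacobian $\partial \bh_k/\partial \bh_j$ genuinely depends on the learned weight matrices and on which ReLU units are active, so the clean "sum over walks with inverse-degree weights" identity holds only in expectation under the usual simplifying assumption (e.g. treating activation patterns as independent Bernoulli variables, as in \cite{xu2018representation}) and after normalizing out the spectral scale of the weight matrices. I would state this assumption explicitly and cite the prior derivations rather than re-deriving the linearized GNN influence formula. A secondary subtlety is that restricting to a single shortest walk only gives a lower bound on a sum of nonnegative walk contributions — this is fine for the inequality direction we want, but I should note that it discards the (beneficial) contributions of all longer walks, so the bound is conservative. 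The degree-averaging via Jensen is then routine, and the passage from a single support node to the class-wise geometric mean is immediate by linearity of expectation once everything is in log form.
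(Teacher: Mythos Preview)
Your proposal is correct and follows the same core strategy as the paper: express the Jacobian as a sum of inverse-degree products over walks, lower-bound by the single shortest-path term, then pass to the class-wise geometric mean by averaging in log space. The paper simplifies the GNN to a linearized GCN (identity activation and weight matrix) rather than invoking the Bernoulli-ReLU expectation model you cite, but the resulting walk formula is the same.

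There is one genuine difference worth noting. The paper first establishes the lemma $\mathbb{E}(I(v_k,s_{i,j})) \geq \bar d^{-\text{SPD}(v_k,s_{i,j})}$ and then moves the logarithm inside the expectation to conclude; as you may have noticed, Jensen for the concave $\log$ actually points the other way, so that step in the paper is not fully justified. Your route---take $\log$ \emph{pointwise} along the shortest walk first, obtaining $\log I \geq -\sum_t \log d_{v_t}$, and only then take expectations and apply Jensen to each $\mathbb{E}[\log d_{v_t}] \leq \log \bar d$---sidesteps this issue cleanly and is the more rigorous argument. The tradeoff is that you must assume the degrees along the walk are themselves random with mean $\bar d$ (which is consistent with the theorem's hypothesis), whereas the paper's lemma formulation hides this inside a single expectation over the path product.
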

The proof is provided in Appendix A. Theorem~\ref{theorem_distance} indicates that the lower bound of the node influence on a specific class is measured by its shortest path distances to all support nodes in this class. Therefore, to effectively select nodes with large influence on a specific class, we can choose nodes with small average shortest path distances to support nodes of this class. Based on this theorem, we propose two strategies, namely \emph{local sampling} and \emph{common sampling}, to 
%the node influence between any two nodes will decay exponentially as their shortest path distance increases. Moreover, 
select nodes for the task-specific structure $G_\mathcal{T}$. In particular, we combine the selected nodes with support and query nodes in the meta-task (i.e., $\mathcal{S}$ and $\mathcal{Q}$) to obtain the final node set $\mathcal{V}_\mathcal{T}=\mathcal{V}_l\cup\mathcal{V}_c\cup\mathcal{S}\cup\mathcal{Q}$. Here $\mathcal{V}_l$ and $\mathcal{V}_c$ are the node sets extracted based on local sampling and common sampling, respectively. $\mathcal{S}$ and $\mathcal{Q}$ are the support set and the query set of $\mathcal{T}$, respectively. Then we introduce the two strategies in detail.
\begin{itemize}[leftmargin=*]
    \item \noindent\textbf{Local Sampling.} In this strategy, we extract the local neighbor nodes of support nodes within a specific distance (i.e., neighborhood size). Intuitively, the neighbor nodes can maintain a small shortest path distance to a specific support node. Therefore, by combining neighbor nodes of all support nodes in a class, we can obtain nodes with considerable node influence on this class without calculating the shortest path distances. Specifically, the extracted node set is denoted as $\mathcal{V}_l=\cup_{v_i\in\mathcal{S}}\mathcal{N}_l(v_i)$, where $\mathcal{N}_l(v_i)=\{u|d(u,v_i)\leq h\}$, and $h$ is the pre-defined neighborhood size. 
    %%%%The intuition is that the neighbors maintain the nodes with smallest distances to a specific support nodes. Thus, this sampling strategy can effectively select nodes with large influence on support nodes.

\item \noindent\textbf{Common Sampling.} In this strategy, we select nodes that maintain a small average distance to all nodes in the same class. In this way, the node influence on an entire class can be considered.
%%%%The disadvantage of the local sampling strategy is that it cannot extract nodes with large influence on all support nodes in the same class simultaneously. 
Specifically, for each of $N$ classes in $\mathcal{T}$, we extract nodes with the smallest average distances to nodes in this support class. The overall extracted node set $\mathcal{V}_c$ can be presented as follows:
\begin{equation}
\mathcal{V}_c=\cup_{i=1}^N \argmin_{\mathcal{V}'\subset\mathcal{V},|\mathcal{V}'|=C}\sum\limits_{v\in\mathcal{V}'}\sum\limits_{j=1}^K d(v,s_{i,j}),
\end{equation}
 where $s_{i,j}$ is the $j$-th node of the $i$-th class in $\mathcal{T}$. Here we extract $C$ nodes with the smallest sum of shortest path distances to nodes in each class. Then we aggregate these nodes into the final node set $\mathcal{V}_c$. In this way, we can select nodes with large influence on an entire class.
 %%%%(i.e., all support nodes in this class), instead of a single node.
 As a result, the selected nodes will bear more crucial information for classifying a specific class.
 Note that since there are only $N$ classes in $\mathcal{T}$, the maximum size of $\mathcal{V}_c$ is $NC$, i.e., $|\mathcal{V}_c|\leq NC$.
\end{itemize}

\noindent\textbf{Edge Weight Functions.}
With the extracted node set $\mathcal{V}_\mathcal{T}$, we intend to learn task-specific edge weights for $G_\mathcal{T}$. Intuitively, 
%%%%%we can keep all edges in the original graph to maximally preserve the graph structures. Nevertheless, 
although the original structural information is crucial for classification, it can also be redundant for meta-task $\mathcal{T}$. Therefore, we propose to construct the edges based on both node representations and the shortest path distance between two nodes. In this way, the model will learn to maintain and learn beneficial edges for $\mathcal{T}$.
%%%%while discarding unhelpful edges. 
Particularly, the edge weight starting from node $v_i$ to node $v_j$ is denoted as $a_{i,j}=(a^r_{i,j}+a^s_{i,j})/2$, where $a^r_{i,j}$ and $a^s_{i,j}$ are learned by two functions that utilize node representations and structures as input, respectively.
\begin{itemize}[leftmargin=*]
\item \textbf{Node representations as input.}
 %\begin{equation}
 %    a_{i,j}=\text{cos}(\bW_1 \bx_i,\bW_2\bx_j)
 %\end{equation}
 \begin{equation}
     a^r_{i,j}=\exp\left(-\left\|\frac{\phi(\bW_1\bx_i)}{\|\phi(\bW_1\bx_i)\|_2}-\frac{\phi(\bW_2\bx_j)}{\|\phi(\bW_2\bx_j)\|_2}\right\|_2\right),
 \end{equation}
 where $\phi$ is a non-linear activation function, and $\bx_i$ denotes the input feature vector of node $v_i$. $\bW_1\in\mathbb{R}^{d_a\times d}$ and $\bW_2\in\mathbb{R}^{d_a\times d}$ are learnable parameters, where $d_a$ is the dimension size of $\bW_1\bx_i$ and $\bW_2\bx_j$. $a^r_{i,j}$ is the edge weight between node $v_i$ and $v_j$ learned from node representations. Such a design naturally satisfies that $a^r_{i,j}\in(0,1]$. Moreover, by introducing two weight matrices $\bW_1$ and $\bW_2$, the learned task-specific structured will be a directed graph to encode more information. 
\item\textbf{Structures as input.}
 \begin{equation}
     a^s_{i,j}=\text{Sigmoid}\left(\psi\left(\text{SPD}(v_i, v_j)\right)\right),
 \end{equation}
where $\psi$ is a learned function that outputs a scalar while utilizing the shortest path distance between $v_i$ and $v_j$ (i.e., $\text{SPD}(v_i,v_j)$) on the original graph. In this way, we can preserve the structural information on the original graph by mapping the distance to a scalar. For example, if $\psi$ is learned as a decreasing function regarding the input $\text{SPD}(v_i,v_j)$, the obtained task-specific structure will result in stronger correlations among nodes that are close to each other on the original graph. 
\end{itemize}

\subsection{Learning Task-specific Structures from Labeled Nodes}
%%With the constructed computation graph $G_\mathcal{T}$, 
With the proposed functions for edge weights, we still need to optimize these weights to obtain the task-specific structure for $\mathcal{T}$. In particular, we can leverage the label information inside labeled nodes (i.e., support nodes in each meta-task).
%%we aim to learn comprehensive node representations that are GLITTER for classification in this meta-task. 
%%Due to the cross-graph few-shot setting, it is crucial to maximally preserve label information in support nodes while ignoring potentially redundant information in graph structures. Nevertheless, directly applying GNNs to $G_\mathcal{T}$ can lead to inadequate decisive information in node representations. Therefore, to enforce the incorporation of crucial label information, we propose to learn edge weights via enhancing node influence between support nodes in the same class. In this way, the larger node influence will enable nodes in the same class to be closer to each other.
Intuitively, the task-specific structure should ensure that the learned representations of nodes in the same class are similar, so that the classification of this class will be easier. According to Definition~\ref{def}, larger node influence represents stronger correlations between nodes, which will increase the similarity between the learned representations.
%%%%As a result, the learned node representations will be more similar.
Therefore, we propose to enhance the node influence between support nodes in each class to optimize the task-specific structure.
%%Intuitively, by adjusting the edge weights in $G_\mathcal{T}$, we can improve the node influence among support nodes in the same class.
However, directly calculating the Jacobian matrix $\|\partial \bh_i/\partial \bh_j\|$ can result in an excessively large computational cost, especially when the representations are high-dimensional~\cite{wang2020unifying}. Hence, instead of directly computing the node influence, we propose to enhance the node influence based on the following theorem:
\begin{theorem}[Node Influence within Classes]
Consider the expectation of node influence from the $j$-th node $v_j$ to all nodes in its same class $\mathcal{C}$ on $G$, where $v_j\in\mathcal{C}$. Assume that $|\mathcal{C}|=K>2$. The overall node influence is $\mathbb{E}\left(\sum_{v_i\in\mathcal{C}\setminus \{v_j\}}I(v_i,v_j)\right)=\sum_{v_i\in\mathcal{C}\setminus \{v_j\}}\frac{K-2}{n-1}\bh^\top_j\cdot\bh_i/\|\bh_j\|^2-\sum_{v_k\in\mathcal{V}\setminus\mathcal{C}}\frac{K-1}{n-1}\bh^\top_j\cdot\bh_k/\|\bh_j\|^2$, where $n$ is the number of nodes in $G$.
\label{support}
\end{theorem}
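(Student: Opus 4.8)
The plan is to reduce the Jacobian-based definition of node influence (Definition~\ref{def}) to a closed form that involves only the output representations, and then to sum over the class. First I would linearize the GNN in the spirit of the analyses underlying Definition~\ref{def} \cite{xu2018representation,wang2020unifying,huang2020graph}: modelling the non-linear activations as independent Bernoulli gates with a common activation probability and factoring out the weight matrices, the Jacobian $\partial\bh_i/\partial\bh_j$ becomes, in expectation, a sum over walks from $v_i$ to $v_j$ in the computation graph weighted by the task-specific edge weights $a_{m,\ell}$. Applying a subordinate norm and taking the expectation then turns $\mathbb{E}[I(v_i,v_j)]$ into a quantity controlled by the entries of the (row-normalized) propagation matrix associated with these edge weights.

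The second step rewrites this quantity in terms of the representations themselves. Because the representation-based edge weight $a^r_{m,\ell}$ is the exponential of the negative Euclidean distance between the $\ell_2$-normalized feature maps $\phi(\bW_1\bx_m)$ and $\phi(\bW_2\bx_\ell)$, it is a monotone function of the correlation between those two normalized vectors, and after one aggregation layer this correlation is proportional to $\bh_m^\top\bh_\ell/\|\bh_\ell\|^2$ (the asymmetric normalization by $\|\bh_\ell\|^2$ rather than $\|\bh_m\|\|\bh_\ell\|$ being exactly what a derivative with respect to $\bh_\ell$ produces when $\bh_\ell$ appears both as an aggregated message and inside its own weight). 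Linearizing this monotone map to first order, I obtain that $\mathbb{E}[I(v_i,v_j)]$ is an affine function $p+q\cdot\bh_i^\top\bh_j/\|\bh_j\|^2$ of the normalized correlation, with coefficients $p,q$ that are the same for every ordered pair.

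The final step pins down $p$ and $q$ and carries out the sum over $\mathcal{C}\setminus\{v_j\}$. I would use that the propagation matrix is row-stochastic together with the mild assumption that the representations are approximately centered, so that $\sum_{v_m\in\mathcal{V}}\bh_m^\top\bh_j=0$, which lets me eliminate one of the two sums over $\mathcal{C}\setminus\{v_j\}$ and $\mathcal{V}\setminus\mathcal{C}$ in favor of the other. Splitting $\mathcal{V}=\mathcal{C}\,\sqcup\,(\mathcal{V}\setminus\mathcal{C})$, summing the affine expression over the $K-1$ nodes of $\mathcal{C}\setminus\{v_j\}$, and then using a counting argument over the $K$ class members among the $n$ nodes of $G$ to evaluate $p$ and $q$, I reach exactly the stated coefficients: $\frac{K-2}{n-1}$ multiplying the in-class correlations $\bh_j^\top\bh_i/\|\bh_j\|^2$ and $\frac{K-1}{n-1}$ multiplying the out-of-class correlations $\bh_j^\top\bh_k/\|\bh_j\|^2$. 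The hypothesis $K>2$ is what keeps the in-class coefficient $K-2$ non-vacuous and the counting identity non-degenerate.

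The step I expect to be the main obstacle is this last one: simultaneously controlling the errors from the two linearizations and, above all, matching the exact combinatorial constants rather than merely pinning them down up to a $K$-independent factor. The clean form of $\frac{K-2}{n-1}$ and $\frac{K-1}{n-1}$ points to a particular averaging over the roles of the class nodes, and identifying that averaging precisely — together with stating the scope of the normalization assumption carefully — is the delicate bookkeeping.
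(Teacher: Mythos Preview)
Your plan takes a substantially harder and ultimately wrong route. The paper's argument does not touch the edge-weight construction $a^r_{m,\ell}$ at all, nor does it linearize any monotone map or assume the representations are centered. The key identity you are missing is that, under the same linearized-GCN model used in the proof of Theorem~\ref{theorem_distance}, the output representation itself decomposes as
\[
\bh_i \;=\; \sum_{k=1}^n I_{ik}\,\bh_k,
\]
i.e.\ node influence \emph{is} the coefficient of $\bh_k$ in the expansion of $\bh_i$. Taking the inner product with $\bh_j/\|\bh_j\|^2$ immediately isolates $I_{ij}$:
\[
\bh_j^\top\bh_i/\|\bh_j\|^2 \;=\; I_{ij} \;+\; \sum_{k\neq j} I_{ik}\,\bh_j^\top\bh_k/\|\bh_j\|^2.
\]
Summing over $i\in\mathcal{C}\setminus\{v_j\}$, splitting the inner $k$-sum into $k\in\mathcal{C}$ and $k\in\mathcal{V}\setminus\mathcal{C}$, and then replacing every $I_{ik}$ by its expectation $1/(n-1)$ (the row-stochasticity assumption the paper actually uses) yields the constants $\frac{K-2}{n-1}$ and $\frac{K-1}{n-1}$ by pure counting: for each fixed $i\in\mathcal{C}\setminus\{v_j\}$ there are $K-2$ other class members $k\in\mathcal{C}\setminus\{i,j\}$, and there are $K-1$ indices $i$ contributing to each out-of-class $k$.

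Your second step, deriving $\mathbb{E}[I(v_i,v_j)]$ as an affine function of $\bh_i^\top\bh_j/\|\bh_j\|^2$ by expanding $a^r$ and linearizing, is a detour that cannot produce exact constants (you admit as much), and it conflates the specific structure-learning module with the general influence definition the theorem is about. Your centering assumption $\sum_m\bh_m^\top\bh_j=0$ is also not needed and not used: the cancellation that eliminates one sum in favor of the other comes from the algebra above, not from any property of the embeddings.
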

The proof is deferred to Appendix B. Theorem~\ref{support} states that maximizing the node influence within one class is equivalent to maximizing the similarity within classes while minimizing similarity between classes. Specifically, the similarity is measured by the dot product between representations of $v_j$ and other nodes. In this way, we can effectively optimize the edge weights by increasing the node influence within each class without a significant computational cost. The overall loss in meta-task $\mathcal{T}$ can be formulated as follows:
\begin{equation}
    \mathcal{L}_{N}=-\frac{1}{N}\sum\limits_{l=1}^N\sum\limits_{v_j\in\mathcal{C}_l}\left(\sum\limits_{v_i\in\mathcal{C}_l\setminus \{j\}}\frac{K-2}{|\mathcal{V}_\mathcal{T}|-1}\bh^\top_j\cdot\bh_i/\|\bh_j\|^2-\sum\limits_{v_k\in\mathcal{V}\setminus\mathcal{C}}\frac{K-1}{|\mathcal{V}_\mathcal{T}|-1}\bh^\top_j\cdot\bh_k/\|\bh_j\|^2\right),
\end{equation}
where $\mathcal{C}_l$ denotes the node set of the $l$-th support class in meta-task $\mathcal{T}$, and $|\mathcal{C}_l|=K$.
%%%%It is noteworhty that this expression is similar with the prevalent contrastive learning strategy, which introduces positive intra-class samples and negative inter-class samples. Therefore, it also allows for further reduction of calculations by decreasing the size of negative samples (i.e., nodes in other classes ). 

\subsection{Learning Task-specific Structures from Unlabeled Nodes}

\noindent\textbf{Mutual Information.}
%%Since we propose to construct a computation graph for each meta-task, the query nodes are also included. 
Although we have leveraged the information in support nodes by enhancing node influence, the unlabeled nodes (i.e., query nodes) in $G_\mathcal{T}$ remain unused. Thus, we further propose to utilize the query nodes to improve the learning of task-specific structures. Specifically, we leverage the concept of mutual information (MI)~\cite{dolztransductive} by maximizing the MI between query nodes and their potential labels. 
%%%%It is noteworthy that the query set size in each meta-task can affect the performance of maximizing MI. Generally, \emph{transductive} inference refers to the setting with a query set size larger than one, while \emph{inductive} inference denotes that each meta-task contains only one query node. Nevertheless, our strategy is feasible for both transductive and inductive settings. 
Formally, the MI can be represented as follows:
\begin{equation}
\begin{aligned}
    \mathcal{L}_M:=-\mathcal{I}(X_Q;Y_Q)
    &=-\mathcal{H}(Y_Q)+\mathcal{H}(Y_Q|X_Q)\\
    &=\sum\limits_{j=1}^N\bar{p}_{j}\log \bar{p}_{j}-\frac{1}{|\mathcal{Q}|}\sum\limits_{i\in\mathcal{Q}}\sum\limits_{j=1}^Np_{ij}\log p_{ij},
\end{aligned}
\label{MI}
\end{equation}
%这里要添加一点说明为啥这样对optimal task-specifc好
where $p_{ij}:= P(y_i=j|\bx_i)$ and $\bar{p}_j=\sum_{i\in\mathcal{Q}}p_{ij}/|\mathcal{Q}|$. $\mathcal{Q}$ is the query node set in meta-task $\mathcal{T}$. 
Here $X_Q$ and $Y_Q$ are the features and labels of query nodes, respectively.
%%%%Particularly, maximizing the first term $\mathcal{H}(Y_Q)$ will force the marginal distribution of labels to be uniform. In other words, the overall sum of probabilities regarding each class will be similar, which provides a regularization to avoid trivial solutions. On the other hand, minimizing the second term $\mathcal{H}(Y_Q|X_Q)$ can reduce the uncertainty of predictions for query nodes. As a result, the model will output more confident predictions, i.e., a larger probability of the most possible class. 
In particular, by maximizing the mutual information between query nodes and their labels, the learned task-specific structure can effectively leverage the information from the query set.

\noindent\textbf{Transformed Markov Chain.} Although the strategy of maximizing MI between $X_Q$ and $Y_Q$ has proven to be effective in few-shot learning, it remains non-trivial to utilize such a strategy for learning task-specific structures. Specifically, the crucial part is to model the dependency of classification probabilities (i.e., $p_{ij}$) on the learned task-specific structure. In this way, we can leverage the mutual information to help optimize the task-specific structure. To estimate the classification probability, we propose to utilize node influence inferred from the task-specific structure in each meta-task. 
%%the estimation method for the classification probability can be unsuitable for graph few-shot learning. The original work~\cite{dolztransductive} adopts an MLP classifier to estimate the probability $p_{ij}$, which is not suitable in our cross-graph few-shot setting. The reason is that the node representations across multiple graphs can be various and reduce the transferability of the learned classifier. Therefore, we propose to model the classification probabilities based on the constructed computation graph, which is dedicated for the specific meta-task. 
Intuitively, we can assume that the higher node influence between two nodes can represent a larger probability that they share the same class. As a result, by estimating the node influence between a query node and a support node, we can optimize the task-specific structure based on Eq. (\ref{MI}). 
%%without accessing the ground truth label of the query node. However, we need further normalization to ensure that the sum of all class probability equals one.
However, it still remains challenging to obtain an exact value of the node influence due to the potential computation cost of calculating the Jacobian matrix. Therefore, we propose to estimate node influence via the absorbing probability in a Markov chain~\cite{tierney1994markov}. Particularly, we create a Markov chain with absorbing states, where each state corresponds to a node on $G_\mathcal{T}$. Here the support nodes in $G_\mathcal{T}$ will be transformed into the absorbing states. Denoting the current adjacency matrix for $G_\mathcal{T}$ as $\bA$, we obtain the transition matrix of the Markov chain from $\bA$ as follows: (1) The transition probability from a support node to other nodes is 0 (i.e., the absorbing state). (2) Each entry is row-wise normalized to ensure that the sum of each row equals 1 (i.e., the transition probabilities starting from one node should sum up to 1). Denoting the obtained transition probability matrix as $\tilde{\bA}$. Here we theoretically validate that the node influence between a query node and a support node can be effectively and efficiently estimated by the absorbing probability. 
\begin{theorem}[Absorbing Probabilities and Node Influence]
%%%%%Transform a graph $G$ with an adjacency matrix $\bA$ into a Markov chain, where each node is a state in the Markov chain. Nodes in $\mathcal{S}$ are absorbing states. Let $\tilde{\bA}$ be the obtained transition matrix of the Markov chain derived from $G$ by setting. 
Consider the Markov chain with a transition matrix $\tilde{\bA}$ derived from a graph $G$. Denote the probability of being absorbed in the $j$-th state (absorbing state) when starting from the $i$-th state as $b_{i,j}$. Then, $I(v_i,v_j)=b_{i,j}$, where $I(v_i,v_j)$ is the node influence from the $i$-th node $v_i$ to the $j$-th node $v_j$ on $G$.
\label{absorbing}
\end{theorem}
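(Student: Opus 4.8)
The plan is to connect the Jacobian-based node influence of Definition~\ref{def} to the absorbing probabilities of the constructed Markov chain by analyzing how a GNN propagates representations along $\tilde{\bA}$. First I would fix the GNN architecture implicit in the construction: a linear message-passing scheme in which, at each layer, a node's representation is the $\tilde{\bA}$-weighted average of its neighbors' representations (this is exactly the row-stochastic propagation that $\tilde{\bA}$ encodes), and in which support nodes are ``frozen'' — their representations do not change, matching the absorbing-state property that the transition probability out of a support node is $0$. Under this setup, after $t$ propagation steps the representation of a non-support node $v_i$ is $\bh_i^{(t)} = \sum_j (\tilde{\bA}^t)_{i,j}\, \bh_j^{(0)}$, and because support states are absorbing while the chain is (generically) absorbing on all transient states, $(\tilde{\bA}^t)_{i,j}$ converges as $t\to\infty$ to $b_{i,j}$ when $v_j$ is a support (absorbing) node and to $0$ when $v_j$ is transient.

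Next I would differentiate. Since $\bh_i = \sum_j b_{i,j}\,\bh_j^{(0)}$ in the limit, and the only quantities that carry the ``source'' information for an absorbing node $v_j$ are its own input features, we get $\partial \bh_i/\partial \bh_j = b_{i,j}\, \mathbf{I}$, the Jacobian being a scalar multiple of the identity. Taking any subordinate matrix norm, which satisfies $\|c\,\mathbf{I}\| = |c|$ for the relevant normalization (or noting $b_{i,j}\ge 0$ so $|b_{i,j}| = b_{i,j}$), yields $I(v_i,v_j) = \|\partial \bh_i/\partial \bh_j\| = b_{i,j}$, which is the claim. I would present the finite-$t$ identity $\partial \bh_i^{(t)}/\partial \bh_j^{(0)} = (\tilde{\bA}^t)_{i,j}\mathbf{I}$ first, then pass to the limit using the standard fact that for an absorbing Markov chain the $t$-step transition probabilities into absorbing states converge to the absorption probabilities $b_{i,j}$ (via the fundamental matrix $N = (\mathbf{I}-\bQ)^{-1}$ with $B = N\bR$ in the usual block decomposition $\tilde{\bA} = \begin{psmallmatrix}\bQ & \bR\\ \mathbf{0} & \mathbf{I}\end{psmallmatrix}$).

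The main obstacle I anticipate is justifying that the node influence ``from $v_i$ to $v_j$'' in the general GNN of Definition~\ref{def} is faithfully captured by this particular linear, absorbing propagation model — i.e., that the identification is not merely an artifact of a toy architecture. I would address this by arguing (as in the cited works on influence and oversmoothing) that the leading-order behavior of $\|\partial \bh_i/\partial \bh_j\|$ for message-passing GNNs is governed by the propagation operator, with the nonlinearities and weight matrices contributing bounded multiplicative factors that do not affect the relative influence pattern; for the exact equality stated, one works in the idealized linear regime, and the theorem should be read as holding in that regime (the same regime already used implicitly elsewhere in the paper). A secondary technical point is ensuring the chain restricted to transient states is indeed absorbing — i.e., every non-support node can reach some support node — which holds as long as $G_\mathcal{T}$ is connected and contains at least one support node; I would state this as a mild standing assumption. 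With these points in place, the computation itself is short.
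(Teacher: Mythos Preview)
Your proposal is correct and rests on the same underlying identification the paper uses---namely that in the idealized linear GCN regime (identity activation and weight, the same regime fixed in Appendix~A), message passing along the row-stochastic $\tilde{\bA}$ is exactly a random walk, so the Jacobian coefficient of $\bh_i$ with respect to $\bh_j$ coincides with the probability that the walk started at $i$ is absorbed at $j$. The packaging, however, differs: the paper argues by \emph{path enumeration}, expanding both $\|\partial\bh_i/\partial\bh_j\|$ (via the recursion $\bh_i=\sum_k (a_{ik}/D_{ii})\bh_k$) and $b_{i,j}$ (via $b_{i,j}=\sum_k p_{ik}b_{k,j}$) into identical sums of products over all $i\!\to\! j$ paths, and then matches the two term by term after setting $p_{ik}=a_{ik}/D_{ii}$. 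You instead take the \emph{matrix-power/limit} route: $\bh_i^{(t)}=\sum_j(\tilde{\bA}^t)_{i,j}\bh_j^{(0)}$, whence $\partial\bh_i^{(t)}/\partial\bh_j^{(0)}=(\tilde{\bA}^t)_{i,j}\mathbf{I}$, and pass to $t\to\infty$ using the standard absorbing-chain fact $(\tilde{\bA}^t)_{i,j}\to b_{i,j}$ (equivalently $B=(\mathbf{I}-\mathbf{Q})^{-1}\mathbf{R}$). Your version is a bit cleaner and makes the ``sufficient number of steps'' assumption explicit as a genuine limit, while the paper's path expansion makes the combinatorial structure visible and directly parallels the Lemma~A.1 computation; substantively they are the same proof. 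Your caveat that every transient node must reach an absorbing node (so that the limit exists) is a point the paper leaves implicit.
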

The proof is provided in Appendix C. Theorem~\ref{absorbing} provides that with the learned task-specific structure in meta-task $\mathcal{T}$, the node influence of a support node on a query node equals the absorbing probability starting from the query node to the support node in the Markov chain. In consequence, we can leverage the property of Markov chains to provide an efficient approximation for node influence.
\begin{theorem}
[Approximation of Absorbing Probabilities] Denote $t$ as the number of non-absorbing states in the Markov chain, i.e., $t=|\mathcal{V}_\mathcal{T}|-|\mathcal{S}|$, where $|\mathcal{V}_\mathcal{T}|$ is the node set in $G_\mathcal{T}$. Denote $\mathbf{Q}\in\mathbb{R}^{t\times t}$ as the transition probability matrix for non-absorbing states in the Markov chain. Estimating $b_{i,j}$ with $\tilde{b}_{i,j}=\sum_{k=1}^t \tilde{\bA}_{k,j}\cdot\sum_{h=0}^m \left(\mathbf{Q}^h\right)_{i,k}$, then the absolute error is upper bounded as $|b_{i,j}-\tilde{b}_{i,j}|\leq C/t^{m-1}(t-1)$, where $m$ controls the upper bound, and $C$ is a constant.
\label{estimation}
\end{theorem}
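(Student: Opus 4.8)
The plan is to exploit the standard characterization of absorbing probabilities in terms of the fundamental matrix. For an absorbing Markov chain with $t$ transient states and transition submatrix $\mathbf{Q}$ among them, together with the submatrix $\mathbf{R}$ recording transitions from transient states into absorbing states, the matrix of absorption probabilities is $\mathbf{B} = (\mathbf{I}-\mathbf{Q})^{-1}\mathbf{R} = \left(\sum_{h=0}^\infty \mathbf{Q}^h\right)\mathbf{R}$. Writing this entrywise gives $b_{i,j} = \sum_{k=1}^t \left(\sum_{h=0}^\infty (\mathbf{Q}^h)_{i,k}\right)\mathbf{R}_{k,j}$, and identifying $\mathbf{R}_{k,j}$ with $\tilde{\bA}_{k,j}$ (the transition probability from transient state $k$ into the $j$-th absorbing state). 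Therefore the proposed estimator $\tilde{b}_{i,j}$ is exactly the truncation of the Neumann series $\sum_{h=0}^\infty \mathbf{Q}^h$ at order $m$, so the error is $|b_{i,j}-\tilde{b}_{i,j}| = \left|\sum_{k=1}^t \left(\sum_{h=m+1}^\infty (\mathbf{Q}^h)_{i,k}\right)\tilde{\bA}_{k,j}\right|$. First I would establish this identity cleanly.

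Next I would bound the tail $\sum_{h=m+1}^\infty \mathbf{Q}^h$. Since $\mathbf{Q}$ is a substochastic matrix (each row sums to at most $1$, with strict inequality somewhere because the chain is absorbing), summing the displayed error over $k$ and using $\sum_j \tilde{\bA}_{k,j}\le 1$ reduces the task to bounding $\sum_{k=1}^t (\mathbf{Q}^h)_{i,k}$, i.e., the total mass remaining in the transient class after $h$ steps starting from $i$. The key quantitative input is a contraction estimate: each row sum of $\mathbf{Q}$ is bounded by some $\rho<1$ (or more precisely, under the row-normalization construction described before the theorem, each transient node has at least one outgoing transition mass going toward the absorbing/support states, so the escaped mass per step is bounded below). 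This gives $\sum_{k}(\mathbf{Q}^h)_{i,k}\le \rho^h$, hence $|b_{i,j}-\tilde b_{i,j}|\le \sum_{h=m+1}^\infty \rho^h = \rho^{m+1}/(1-\rho)$. To land on the stated form $C/(t^{m-1}(t-1))$, I would use a structural bound relating $\rho$ to $t$: with uniform-ish normalization the per-step escape probability from each transient state is at least of order $1/t$ (a transient node is adjacent to at least one support node and row-normalization spreads mass over at most $t$ transient neighbors plus the support ones), so $\rho \le 1 - c/t$ for a constant $c$; then $\rho^{m+1}/(1-\rho) \le (1-c/t)^{m+1}\cdot t/c$, and crude bounding of $(1-c/t)^{m+1}$ together with absorbing the constants yields a bound of the claimed shape $C/(t^{m-1}(t-1))$.

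The main obstacle is making the contraction constant $\rho$ explicit and $t$-dependent in a way that is actually justified by the construction of $\tilde{\bA}$, rather than merely assuming a spectral-radius gap. The clean part is the Neumann-series identity; the delicate part is arguing that every transient (non-support) state leaks a $\Omega(1/t)$ fraction of its probability into the support set in one step — this needs either the assumption that in $G_\mathcal{T}$ each selected node is connected to a support node (which holds by construction, since $\mathcal{V}_l$ and $\mathcal{V}_c$ are built from neighbourhoods/short paths to support nodes) or an explicit lower bound on the edge weights $a_{i,j}$ coming from the edge-weight functions (note $a^r_{i,j}\in(0,1]$ and $a^s_{i,j}\in(0,1)$, both strictly positive). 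I would state this positivity as the hypothesis that makes $C$ a genuine constant, then push the bookkeeping through; matching the exact exponent $m-1$ in the denominator is then just a matter of how coarsely one bounds $(1-c/t)^{m+1}$, and I would simply absorb slack into $C$.
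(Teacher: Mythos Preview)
Your Neumann-series setup and the identification of the truncation error
\[
|b_{i,j}-\tilde b_{i,j}|=\sum_{k=1}^t\Bigl(\sum_{h=m+1}^\infty(\mathbf{Q}^h)_{i,k}\Bigr)\tilde{\bA}_{k,j}
\]
are correct and match the paper exactly. The gap is in the quantitative step. Your plan bounds the \emph{row sum} $\sum_k(\mathbf{Q}^h)_{i,k}\le\rho^h$ with $\rho\le 1-c/t$, and then claims that $(1-c/t)^{m+1}\cdot t/c$ can be massaged into the form $C/(t^{m-1}(t-1))$. It cannot: for fixed $m$ and $t\to\infty$ your bound behaves like $t/c\to\infty$, whereas the target $C/(t^{m-1}(t-1))\to 0$. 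The $t$-dependence is in the wrong direction, so no choice of constant $C$ rescues it. A per-step leak of order $1/t$ is simply too slow to produce polynomial-in-$t$ decay of the tail after only $m$ steps.

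What the paper does instead is bound each \emph{individual entry} $(\mathbf{Q}^h)_{i,k}$, not the row sum. Writing $(\mathbf{Q}^h)_{i,k}$ as a sum over length-$h$ paths of products of transition probabilities, it bounds the sum by a constant times the largest such product, and then controls that product via AM--GM: the geometric mean of the $h$ transition probabilities along any path is at most their arithmetic mean, which is taken to be $1/t$ since transition probabilities out of a state sum to $1$ over $t$ transient states. This yields $(\mathbf{Q}^h)_{i,k}\le C/t^h$. Summing over $h\ge m+1$ gives $C\cdot\dfrac{1/t^{m+1}}{1-1/t}$, and then the outer sum $\sum_{k=1}^t p_{kj}\le t$ contributes a single factor of $t$, producing exactly $\dfrac{C}{t^{m-1}(t-1)}$. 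The essential difference is that the paper extracts a factor $1/t$ \emph{per step along the path} (hence $1/t^h$), whereas your row-sum contraction extracts only $1-c/t$ per step; you need the former to hit the stated exponent.
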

The proof is provided in Appendix D. Theorem~\ref{estimation} indicates that the estimation error of node influence is upper bounded by $C/t^{m-1}(t-1)$, which is controlled by $m$.
%%%%, which controls the calculations required. 
Therefore, we can adjust the value of $m$ to fit in different application scenarios. With Theorem~\ref{absorbing} and Theorem~\ref{estimation}, we can estimate the value of the classification probability as follows:
\begin{equation}
   P(y_i=j|\bx_i)=\sum\limits_{k=1}^t \tilde{\bA}_{k,j}\cdot \sum\limits_{h=0}^m(\mathbf{Q}^h)_{i,k}
\end{equation}
As a result, we can optimize the objective in Eq. (\ref{MI}) according to the estimated node influence. In this way, we can adjust edge weights to reach a state that is tailored for this meta-task $\mathcal{T}$ and beneficial for the following classification of query nodes.

\subsection{Meta Optimization}
To effectively accumulate learned meta-knowledge across a variety of meta-tasks,
%%%%transfer the learned meta-knowledge across multiple graphs to novel classes and graphs, 
we adopt the prevalent meta-optimization strategy based on MAML~\cite{finn2017model}. Generally, MAML aims at learning the unique update scheme based on each meta-task for fast adaptation to novel meta-tasks. Specifically, for each meta-task, we first perform several update steps to learn task-specific structures based on the two structure losses: $\theta^{(i)}_S=\theta^{(i-1)}_S-\alpha\nabla\mathcal{L}^{(i)}_{S}$, where $\theta_S$ denotes the parameters used to learn task-specific structures, and $\mathcal{L}^{(i)}_S=\mathcal{L}^{(i)}_N+\mathcal{L}^{(i)}_M$. During each update step, we leverage a GNN to learn node representations from the current task-specific structure, followed by an MLP layer to classify the support nodes: $\theta^{(i)}_G=\theta^{(i-1)}_G-\alpha\nabla\mathcal{L}^{(i)}_{support}$, where $\mathcal{L}^{(i)}_{support}=-\sum_{k\in\mathcal{S}}\sum_{j}y_{k,j}\log p^{(i)}_{k,j}$ is the cross-entropy loss, and $\alpha$ is the base learning rate. Here $y_{k,j}\in\{0,1\}$ and $y_{k,j}=1$ if the $k$-th node in $\mathcal{S}$ (i.e., the support set) belongs to the $j$-th support class in meta-task $\mathcal{T}$; otherwise $y_{k,j}=0$.
%%%%is the one-hot encoding of the ground truth label of the $k$-th node in $\mathcal{S}$, which is the support node set in meta-task $\mathcal{T}$. 
$p^{(i)}_{k,j}$ is the $j$-the entry of the class distribution vector $\mathbf{p}^{(i)}_k$ calculated by $\mathbf{p}^{(i)}_k=\text{Softmax}(\text{MLP}(\mathbf{h}^{(i)}_k))$, where $\mathbf{h}^{(i)}_k$ is extracted from the learned node representations $\mathbf{H}^{(i)}=\text{GNN}(\bA^{(i)},\bX)$. Here $\bA^{(i)}$ is the edge weights during the $i$-th step. $\theta_G$ denotes the parameters of the GNN and the MLP classifier. After repeating these steps for $\eta$ times, the loss on the query set will be used for the meta-update: $\theta_G=\theta_G^{(\eta)}-\beta_1\nabla\mathcal{L}^{(\eta)}_{query}$ and $\theta_S=\theta_S^{(\eta)}-\beta_2\nabla\mathcal{L}^{(\eta)}_{S}$, where $\beta_1$ and $\beta_2$ are the meta-learning rates, and $\mathcal{L}^{(\eta)}_{query}$ is the cross-entropy loss calculated on query nodes. For the meta-test tasks, we will use the final meta-updated parameters to evaluate the model.

%%%from the original model parameters $\theta$ according to the \emph{support loss} (i.e., the loss on the support set). Then MAML updates $\theta$ based on the \emph{query loss} (i.e., the loss on the query set). 
%这里应该写MAML的思想为什么和我们契合，而不是直接说loss很接近
%%%It is noteworthy that with our designed loss, the proposed framework GLITTER naturally fits into the basic idea of two-phase update strategy in MAML. In particular, the enhancing of node influence promotes the learning of edge weights in the computation graph, which can be considered as the support loss. The maximization of mutual information between the support set and the query set can advance the classification performance on query nodes, which is similar with the query loss. Therefore, the incorporation of the two-phase update strategy in MAML can benefit 

\section{Experiments}

\noindent\textbf{Experimental settings.}
%%%%Our framework adopts the prevalent meta-learning strategy based on MAML to ensure better meta-optimization performance. Specifically, 
The training process of our framework is conducted on a series of meta-training tasks. After training, the model will be evaluated on a specific number of meta-test tasks.
%%%%, which follow a similar structure with meta-training tasks, except that the nodes in meta-test tasks are sampled from distinct graphs or classes. 
Here we introduce four few-shot node classification settings for the experiments. (1) \underline{Shared graphs with disjoint labels.} Under this setting, the meta-training and meta-test tasks are sampled from the same graph set. The labels in meta-test tasks are disjoint from those in meta-training tasks. 
%%%It is worth mentioning that this setting is slightly more difficult than the traditional single-graph few-shot learning setting (i.e., disjoint labels for meta-training and meta-test tasks, sampled from one graph). If we combine these multiple graphs into a large (disconnected graph), this setting can be considered as a specific form of the traditional setting. 
(2) \underline{Disjoint graphs with shared labels.} Under this setting, the meta-training tasks and meta-test tasks are from disjoint graph sets. In other words, the graphs in meta-test tasks will be new graphs unseen during training. The labels are shared during training and test, which slightly decreases the difficulty. (3) \underline{Disjoint graphs with disjoint labels.} Under this setting, both the graph set and the label set of meta-test tasks are disjoint from those in meta-training tasks. That being said, the model is required to handle new labels on graphs unseen during training, which renders the highest difficulty. (4) \underline{Single graph with disjoint labels.} This setting is widely adopted in existing graph few-shot learning methods~\cite{ding2020graph,wang21AMM,zhou2019meta}. Under this setting, all nodes will be sampled from the same graph. Note that setting (1) and setting (2) are proposed in G-Meta~\cite{huang2020graph}, while setting (3) is a new setting introduced in this paper. Detailed experimental and parameter settings are provided in Appendix E.

\noindent\textbf{Datasets.}
\label{sec:dataset}
To evaluate the performance of GLITTER on the few-shot node classification task, we conduct experiments on five prevalent real-world graph datasets, two for the multiple-graph setting and three for the single-graph setting. The detailed statistics are provided in Table~\ref{tab:stat}.
For the multiple-graph setting, we use the following two datasets: 
(1) \underline{Tissue-PPI}~\cite{zitnik2017predicting} consists of 24 protein-protein interaction (PPI) networks obtained from different types of tissues. Specifically, node features are gene signatures, and labels are gene ontology functions. This dataset consists of ten binary classification tasks.
(2) \underline{Fold-PPI}~\cite{huang2020graph} contains of 144 tissue PPI networks, where the node labels are protein structures in SCOP database~\cite{andreeva2020scop}. Moreover, the node features are conjoint triad protein descriptors~\cite{shen2007predicting}. There are 29 classes in this dataset.
For the single-graph setting, we leverage three datasets:
(1) \underline{DBLP}~\cite{tang2008arnetminer} is a citation network with nodes and edges representing papers and citation relations, respectively. The node features are based on the paper abstracts, and the labels are determined by the paper venues. This dataset consists of 137 classes.
(2) \underline{Cora-full}~\cite{bojchevski2018deep} is also a citation network which contains 70 classes.
(3) \underline{ogbn-arxiv}~\cite{hu2020open} is a citation network that consists of 20 classes and significantly more nodes than other datasets.

	\begin{table*}[htbp]
	        \vspace{-0.1in}
		\setlength\tabcolsep{6pt}%调列距
		\small
		\centering
		\renewcommand{\arraystretch}{1.2}
		\caption{Statistics of five node classification datasets.}
		\vspace{-0.1in}
        \vspace{0.08in}
		\begin{tabular}{c|c|c|c|c|c}
		\hline
        \textbf{Dataset}&\# Graphs &\# Nodes & \# Edges & \# Features & \# Class\\
        \hline
        \text{Tissue-PPI}&24&51,194& 1,350,412&50& 10\\
        \text{Fold-PPI}&144&274,606&3,666,563&512 &29\\\hline
        \text{DBLP}&1&40,672&288,270&7,202&137\\
        \text{Cora-full}&1&19,793&65,311&8,710&70\\
        \text{ogbn-arxiv}&1&169,343&1,166,243&128&40\\
        
        \hline
		\end{tabular}

		\label{tab:stat}
	\end{table*}

\noindent\textbf{Baseline methods.}
(1) \underline{KNN}~\cite{dudani1976distance} trains a GNN based on all samples in training graphs to learn node representations. Then it classifies query samples according to the labels in the support set of each meta-task. 
(2) \underline{ProtoNet}~\cite{snell2017prototypical} learns the prototype of each class by averaging its node representations. Then it classifies query samples based on their distances to the prototypes.
(3) \underline{MAML}~\cite{finn2017model} adopts the meta-learning strategy and learns node representations via GNNs.
(4) \underline{Meta-GNN} combines MAML with Simple Graph Convolution (SGC)~\cite{wu2019simplifying}.
(5) \underline{G-Meta}~\cite{huang2020graph} learns node representations from extracted local subgraphs and further adopts MAML for meta-learning.
(6) \underline{GPN}~\cite{ding2020graph} learns prototypes based on node importance and classifies query nodes according to their distances to the learned prototypes.
(7) \underline{RALE}~\cite{liu2021relative} learns node dependencies
based on node locations on the graph. It is noteworthy that Meta-GNN, GPN, and RALE only work for the single-graph setting, while other baseline methods can be applied to both single-graph and multiple-graph settings.

\subsection{Main Results}
Table~\ref{tab:all_result1} and Table~\ref{tab:all_result2} present the performance comparison of our framework GLITTER and all baselines on few-shot node classification under the multiple-graph setting and the single-graph setting, respectively. Specifically, for the multiple-graph task, we conduct experiments under three settings (i.e., shared graphs with disjoint labels, disjoint graphs with shared labels, and disjoint graphs with disjoint labels). For the single-graph task, we choose two different few-shot settings to obtain a more comprehensive comparison: 5-way 3-shot (i.e., $N=5$ and $K=3$) and 10-way 3-shot (i.e., $N=10$ and $K=3$). For the evaluation metric, we utilize the average classification accuracy over 10 repetitions. From Table~\ref{tab:all_result1} and Table~\ref{tab:all_result2}, we can obtain the follow observations: (1)  Our framework GLITTER outperforms all other baselines in all multiple-graph and single-graph datasets. GLITTER also consistently achieves the best results under different settings, which validates the superiority of GLITTER on few-shot node classification problems. (2) When the multiple-graph setting is conducted with disjoint graphs, GLITTER has the least performance drop compared with other baselines. This is because the learned task-specific structure is tailored for each meta-task and is thus more capable in the multiple-graph setting, where the structures in meta-tasks across graphs can vary greatly. (3) The performance improvement of GLITTER over other baselines is relatively larger on \text{Fold-PPI} and \text{DBLP}. This is because these datasets consist of more classes than other datasets under the same setting. Since the classes in different meta-tasks are more diversely distributed, the task-specific structures learned by GLITTER will provide better performance in this situation. (4) When the value of $N$ increases (i.e., larger class set in each meta-task), the performance of all baselines drops significantly due to the higher classification difficulty. Nevertheless, GLITTER consistently outperforms the other baselines. The reason is that a larger class set in a meta-task will require more specific representations for classification due to the variety of classes.

	\begin{table*}[htbp]
			\setlength\tabcolsep{10.1pt}%调列距
		\small
		\centering
		\renewcommand{\arraystretch}{1.2}
		\caption{The few-shot node classification results (accuracy in \%) under the multiple-graph setting.}

		\begin{tabular}{c||c|c|c||c|c|c}
			\hline
			%\multirow{2}{*}{Model}
			Dataset&\multicolumn{3}{c||}{Tissue-PPI }&\multicolumn{3}{c}{Fold-PPI}
			\\
			\hline
			Graph Setting&Shared &Disjoint &Disjoint&Shared&Disjoint&Disjoint\\
			\hline
			Label Setting&  Disjoint&Shared&Disjoint& Disjoint&Shared&Disjoint\\\hline \hline
			KNN&62.3\scriptsize{$\pm3.1$}&63.6\scriptsize{$\pm2.3$}&56.6\scriptsize{$\pm3.2$}&40.0\scriptsize{$\pm3.4$}&50.2\scriptsize{$\pm2.2$}&38.6\scriptsize{$\pm2.8$}\\\hline
            ProtoNet&59.0\scriptsize{$\pm3.3$}&61.8\scriptsize{$\pm2.7$}&55.7\scriptsize{$\pm3.2$}&37.4\scriptsize{$\pm2.0$}&46.5\scriptsize{$\pm2.0$}&34.7\scriptsize{$\pm3.1$}\\\hline
MAML&63.8\scriptsize{$\pm4.3$}&68.4\scriptsize{$\pm3.7$}&56.0\scriptsize{$\pm4.0$}&41.0\scriptsize{$\pm3.0$}&49.3\scriptsize{$\pm2.8$}&42.5\scriptsize{$\pm4.4$}\\\hline
G-Meta&64.6\scriptsize{$\pm4.9$}&69.9\scriptsize{$\pm4.1$}&58.1\scriptsize{$\pm2.4$}&46.1\scriptsize{$\pm5.0$}&54.7\scriptsize{$\pm4.3$}&50.3\scriptsize{$\pm5.2$}\\\hline
GLITTER&\textbf{69.7}\scriptsize{$\pm3.9$}&\textbf{73.1}\scriptsize{$\pm4.8$}&\textbf{60.3}\scriptsize{$\pm3.1$}&\textbf{53.3}\scriptsize{$\pm3.6$}&\textbf{61.3}\scriptsize{$\pm2.9$}&\textbf{54.7}\scriptsize{$\pm4.0$}\\\hline
		\end{tabular}
		\label{tab:all_result1}
		\vspace{-0.1in}
	\end{table*}

	\begin{table*}[htbp]
			\setlength\tabcolsep{3.2pt}%调列距
		\small
		\centering
		\renewcommand{\arraystretch}{1.2}
		\caption{The few-shot node classification results (accuracy in \%) under the single-graph setting.}

		\begin{tabular}{c||c|c||c|c||c|c}
			\hline
			%\multirow{2}{*}{Model}
			Dataset&\multicolumn{2}{c||}{DBLP }&\multicolumn{2}{c||}{Cora-full}&\multicolumn{2}{c}{ogbn-arxiv}
			\\\hline 
			Setting&5-way 3-shot&10-way 3-shot&5-way 3-shot&10-way 3-shot&5-way 3-shot&10-way 3-shot\\\hline \hline
			KNN&63.4\scriptsize{$\pm3.3$}&54.1\scriptsize{$\pm2.5$}&56.9\scriptsize{$\pm2.1$}&42.4\scriptsize{$\pm2.7$}&45.4\scriptsize{$\pm3.3$}&35.7\scriptsize{$\pm2.4$}\\\hline
            ProtoNet&58.5\scriptsize{$\pm2.6$}&51.3\scriptsize{$\pm3.3$}&49.6\scriptsize{$\pm2.2$}&41.4\scriptsize{$\pm1.9$}&42.9\scriptsize{$\pm3.0$}&34.6\scriptsize{$\pm3.0$}\\\hline
MAML&60.4\scriptsize{$\pm4.5$}&52.3\scriptsize{$\pm3.5$}&54.0\scriptsize{$\pm4.4$}&38.1\scriptsize{$\pm3.7$}&43.0\scriptsize{$\pm3.1$}&34.1\scriptsize{$\pm3.4$}\\\hline
Meta-GNN&66.4\scriptsize{$\pm2.8$}&58.0\scriptsize{$\pm2.6$}&59.7\scriptsize{$\pm2.7$}&42.5\scriptsize{$\pm3.6$}&48.8\scriptsize{$\pm2.4$}&35.0\scriptsize{$\pm4.8$}\\\hline
G-Meta&73.1\scriptsize{$\pm3.4$}&57.5\scriptsize{$\pm5.0$}&63.7\scriptsize{$\pm4.0$}&47.6\scriptsize{$\pm4.9$}&50.8\scriptsize{$\pm4.8$}&38.2\scriptsize{$\pm4.9$}\\\hline
GPN&75.4\scriptsize{$\pm2.3$}&64.0\scriptsize{$\pm3.8$}&62.1\scriptsize{$\pm3.4$}&47.3\scriptsize{$\pm4.4$}&55.3\scriptsize{$\pm3.1$}&36.3\scriptsize{$\pm4.6$}\\\hline
RALE&74.9\scriptsize{$\pm3.2$}&65.1\scriptsize{$\pm5.7$}&63.4\scriptsize{$\pm4.0$}&45.2\scriptsize{$\pm3.6$}&54.7\scriptsize{$\pm3.9$}&38.4\scriptsize{$\pm5.1$}\\\hline
GLITTER&\textbf{79.3}\scriptsize{$\pm3.1$}&\textbf{69.5}\scriptsize{$\pm5.2$}&\textbf{66.2}\scriptsize{$\pm4.4$}&\textbf{52.0}\scriptsize{$\pm2.6$}&\textbf{58.5}\scriptsize{$\pm4.7$}&\textbf{44.2}\scriptsize{$\pm3.8$}\\\hline
		\end{tabular}
		\label{tab:all_result2}
		\vspace{0cm}
	\end{table*}

\subsection{Ablation Study}
In this section, we perform a series of ablations studies to evaluate the effectiveness of different components in our framework GLITTER. The results are presented in Figure~\ref{ablation2}. Specifically, we compare our proposed framework GLITTER with three degenerate versions: (1) GLITTER without sampling relevant nodes (i.e., the task-specific structure only consists of support nodes and query nodes), denoted as GLITTER\textbackslash S; (2) GLITTER without enhancing the node influence within classes, denoted as GLITTER\textbackslash N; (3) GLITTER without maximizing the mutual information between query nodes and their potential labels, denoted as GLITTER\textbackslash M. From Figure~\ref{ablation2}, we can obtain several meaningful observations. First, our framework consistently outperforms all variants with certain components removed, which demonstrates that each module in GLITTER plays a crucial role in learning task-specific structures. Second, removing the relevant nodes deteriorates the performance under the multiple-graph setting with disjoint graphs more than that on shared graphs. The reason is meta-knowledge from training graphs can be uninformative for a novel graph during evaluation. As a result, the relevant nodes on the novel graph become more crucial for learning task-specific structures. Third, the performance decreases differently by removing the node influence module or the mutual information module. Specifically, removing node influence generally causes a more significant drop when $N$ increases. This is due to that a larger class set in each meta-task requires more distinct node representations in different classes. Therefore, enhancing the node influence can provide better performance by enlarging correlations within classes.

\begin{figure}[!t]
\centering
        \begin{subfigure}[t]{0.49\textwidth}
        \small
        \includegraphics[width=1.02\textwidth]{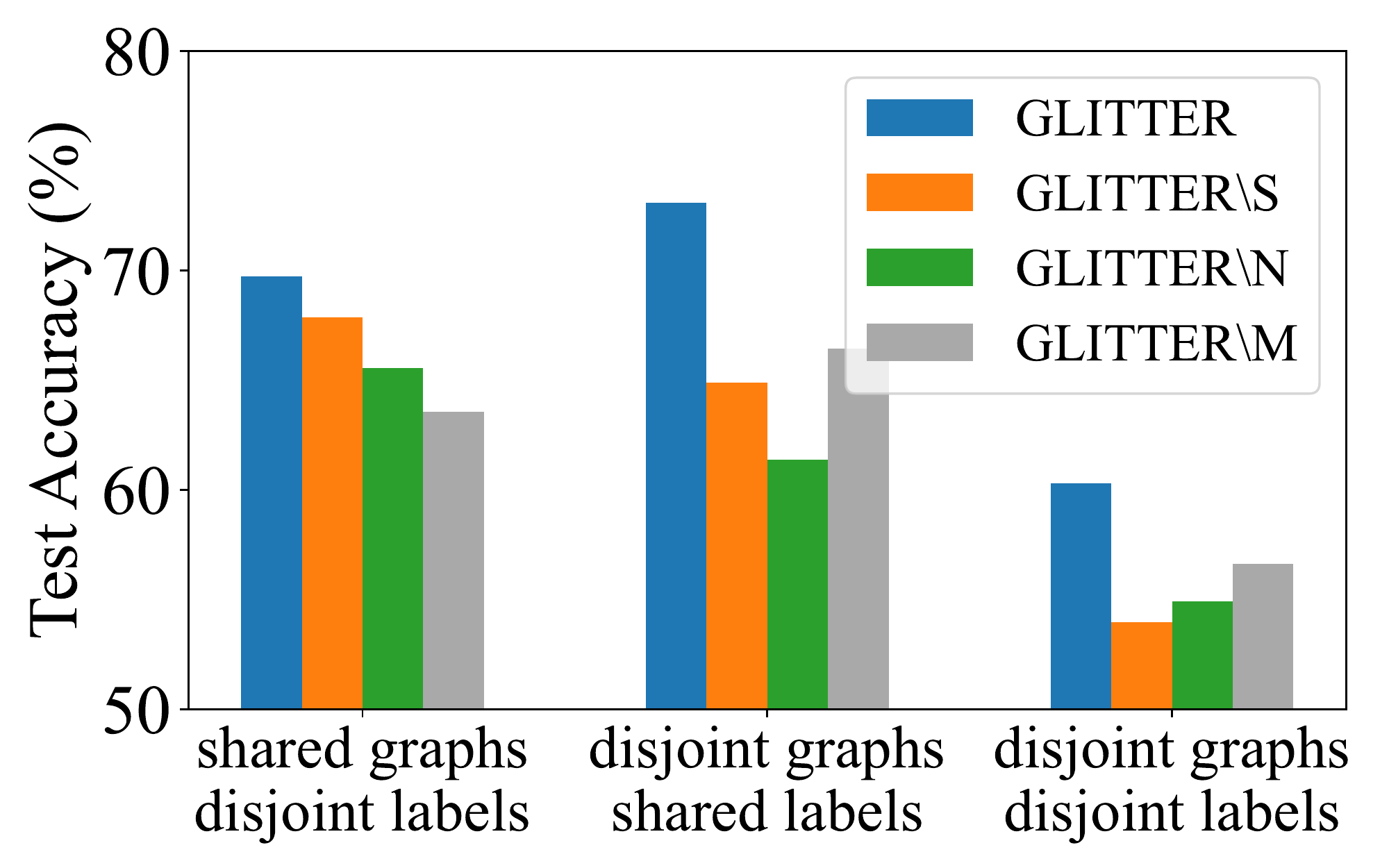}
        % \vspace{-4mm}
                \vspace{-0.2in}
            \caption[Network2]%
            % {{\footnotesize $\Delta G_{\text{SP}}$ on Income}} 
            {{\footnotesize Tissue-PPI}} 
            \label{ablation1}
        \end{subfigure}
        \begin{subfigure}[t]{0.49\textwidth}
        \small
        \includegraphics[width=1.02\textwidth]{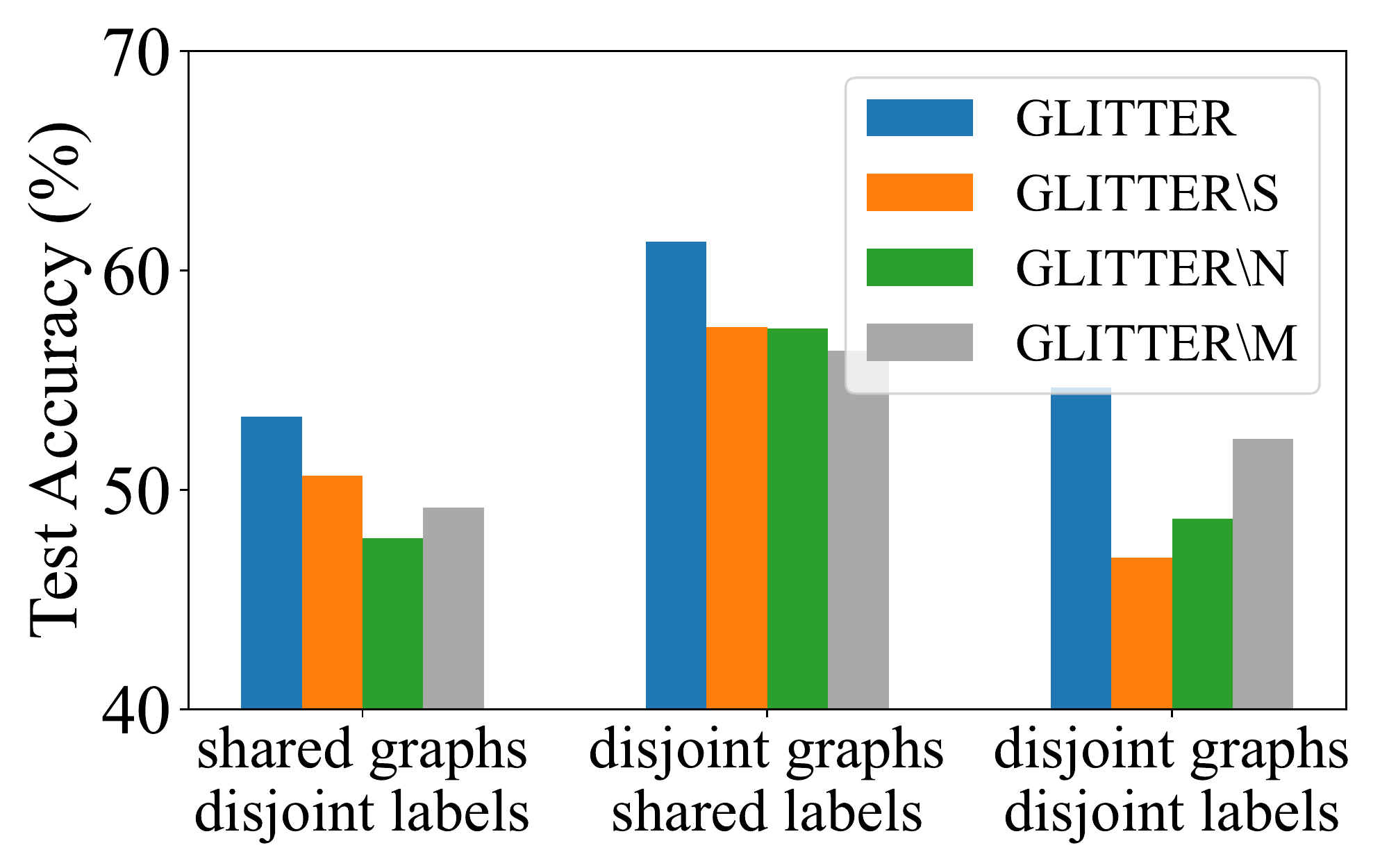}
        % \vspace{-4mm}
                \vspace{-0.2in}
            \caption[Network2]%
            {{\footnotesize Fold-PPI}}    
            \label{ablation2}
        \end{subfigure}
    \label{ablation1}

        \begin{subfigure}[t]{0.49\textwidth}
        \small
        \includegraphics[width=1.02\textwidth]{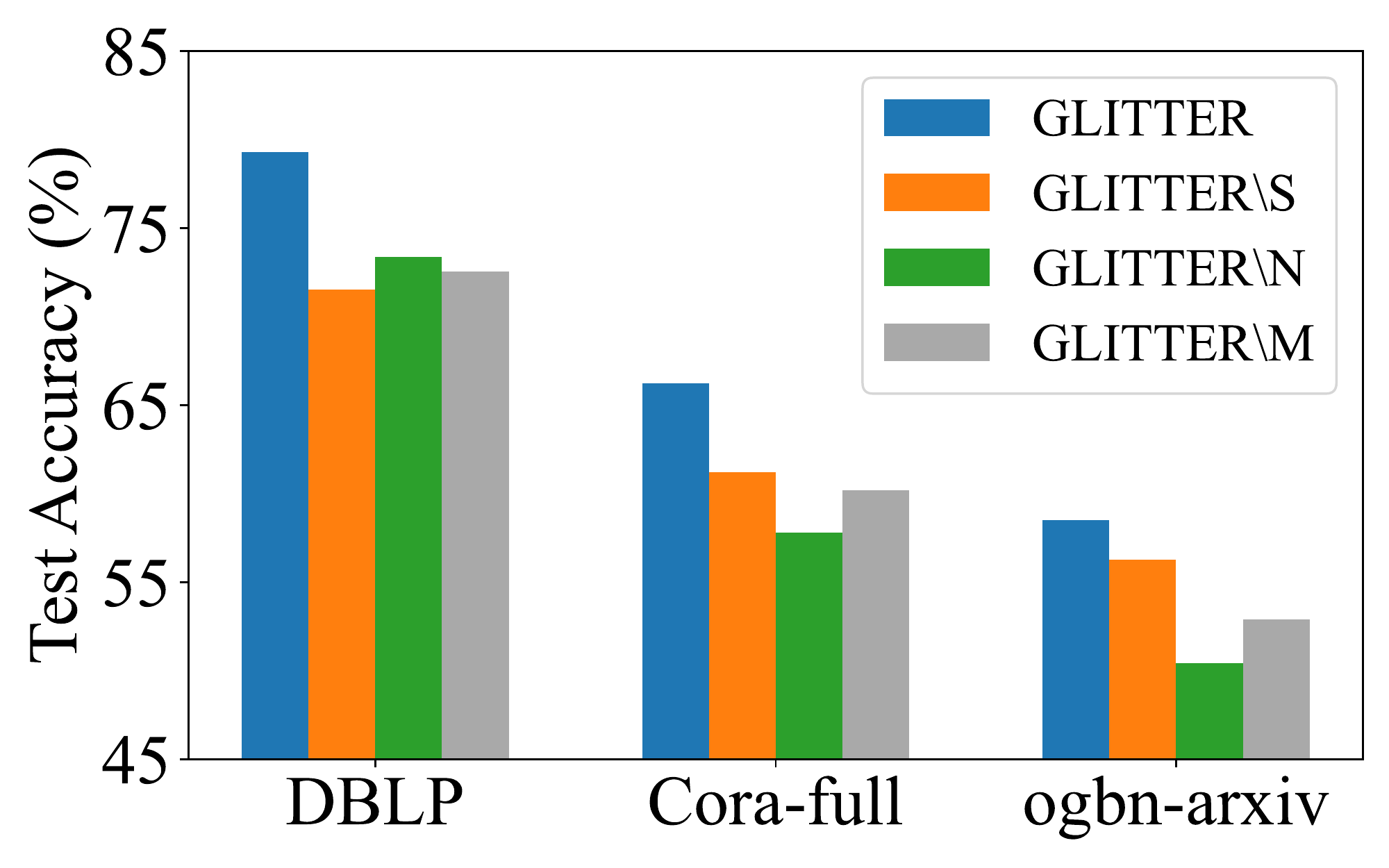}
        % \vspace{-4mm}
                \vspace{-0.2in}
            \caption[Network2]%
            % {{\footnotesize $\Delta G_{\text{SP}}$ on Income}} 
            {{\footnotesize 5-way}} 
            \label{ablation1}
        \end{subfigure}
        \begin{subfigure}[t]{0.49\textwidth}
        \small
        \includegraphics[width=1.02\textwidth]{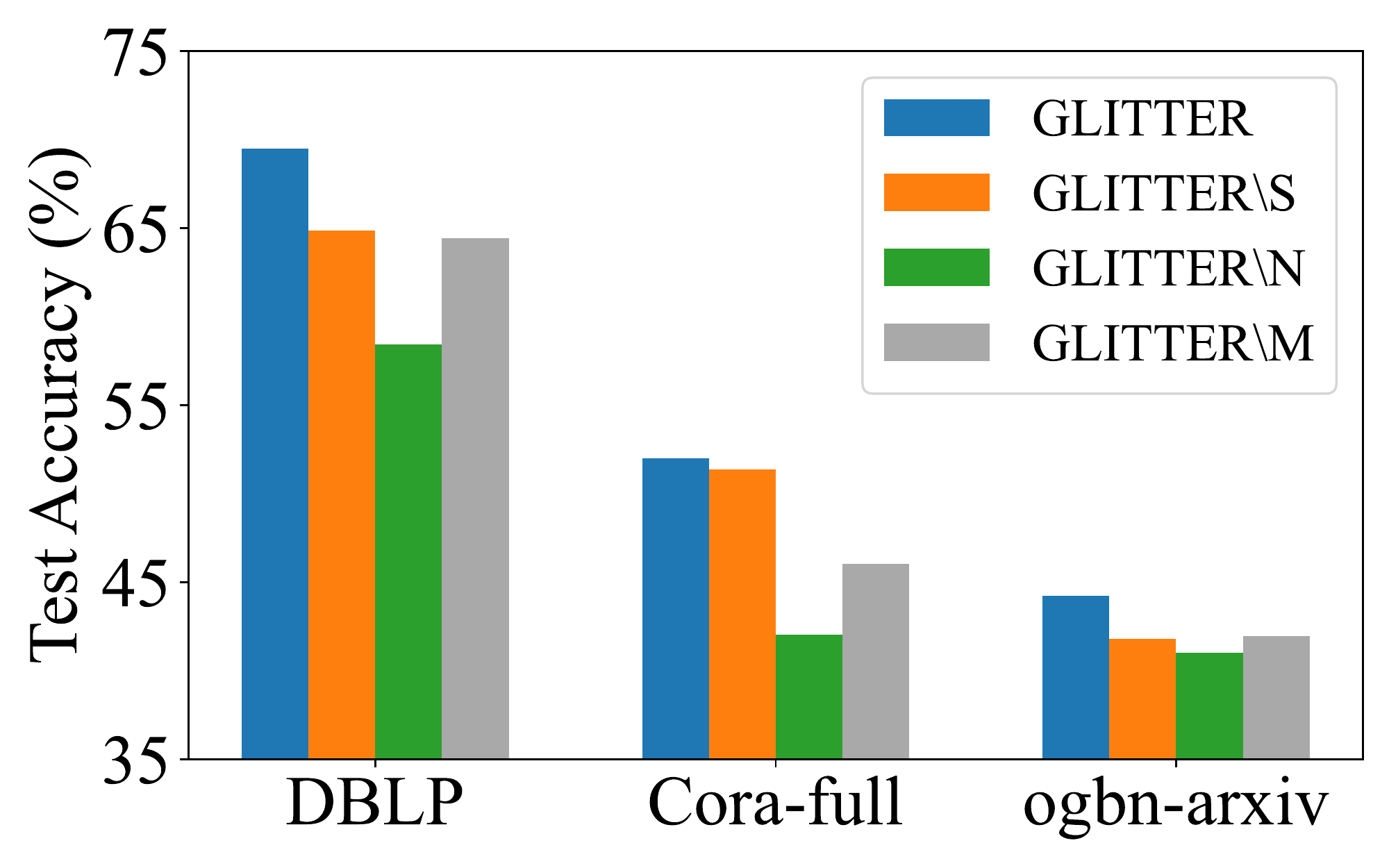}
        % \vspace{-4mm}
        \vspace{-0.2in}
            \caption[Network2]%
            {{\footnotesize 10-way}}    
            \label{ablation2}
        \end{subfigure} 
    \caption{The ablation study of GLITTER on both single-graph and mutiple-graph settings.}
    \label{ablation2}
        \vspace{-0.2in}
\end{figure}

\section{Related Work}
\noindent\textbf{Few-shot Learning on Graphs.} Few-shot learning aims at effectively solving tasks with scarce labeled samples as references. The general approach is to learn meta-knowledge from tasks with abundant labeled samples and generalize it to a new task with few labeled samples. Few-shot learning frameworks generally adopt the meta-learning strategy, which means both training and test processes will be conducted on a series of learning tasks. Existing few-shot learning methods can be broadly divided into two categories: (1) \emph{Metric-based} approaches, which aim at learning a metric function between the query set and the support set for classification~\cite{vinyals2016matching,liu2019learning,sung2018learning,snell2017prototypical}. (2) \emph{Optimization-based} approaches, which focus on updating model parameters according to gradients calculated on the support set in each meta-task~\cite{mishra2018simple,ravi2016optimization,finn2017model,nichol2018first}. Recently, many works propose to incorporate existing few-shot learning frameworks into graph few-shot learning problems~\cite{liu2021relative,wang21AMM,ding2020inductive,xiong2018one,chen2019meta,wang2021reform,tan2022graph,wang2022faith}. For example, GPN~\cite{ding2020graph} leverages the importance of nodes and combines Prototypical Networks~\cite{snell2017prototypical} to solve few-shot node classification problem. G-Meta~\cite{huang2020graph} extracts local subgraphs for nodes for fast adaptations across multiple graphs. TENT~\cite{wang2022task} proposes to mitigate the adverse impact of task variance with proposed node-level and class-level adaptations.

    \noindent\textbf{Graph Neural Networks.}
   Recently, many researchers have focused on using Graph Neural Networks (GNNs) to learn effective representation for nodes on graphs~\cite{chang2015heterogeneous,xu2018representation,wu2020comprehensive,cao2016deep,zeng2019graphsaint,chiang2019cluster,wu2020comprehensive,ding2022data}. With GNNs, models will gradually learn comprehensive node representations based on the specific downstream tasks~\cite{hamilton2017inductive}.
    In particular, GNNs learn node representations from neighbor nodes via an information aggregation mechanism. As a result, GNNs can incorporate both structural information and node feature information into the representation learning process. For example, Graph Convolutional Networks (GCNs)~\cite{kipf2017semi} leverage a first-order approximation to learn node representations with graph convolution layers in a simplified manner. Graph Attention Networks (GATs)~\cite{velivckovic2017graph} aim to learn adjustable weights for neighboring nodes via the attention mechanism to aggregate features of neighboring nodes in an adaptive manner.

\section{Conclusion}
    In this paper, we propose to learn task-specific structures for each meta-task in graph few-shot learning, so that the learned node representations will be tailored for each meta-task. To handle the variety of nodes among meta-tasks, we propose a novel framework, GLITTER, to extract important nodes and learn task-specific structures based on node influence and mutual information. As a result, our framework can adaptively learn node representations to promote classification performance in each meta-task. We conduct extensive experiments on five node classification datasets under both the single-graph and multiple-graph settings, and the results validate the superiority of our framework over other state-of-the-art baselines. 
\section*{Acknowledgement}
This work is supported by the National Science Foundation under grants IIS-2006844, IIS-2144209, IIS-2223769, CNS-2154962, and BCS-2228534, the JP Morgan Chase Faculty Research Award, the Cisco Faculty Research Award, and Jefferson Lab subcontract JSA-22-D0311.

\section*{Checklist}

\begin{enumerate}

\item For all authors...
\begin{enumerate}
  \item Do the main claims made in the abstract and introduction accurately reflect the paper's contributions and scope?
    \answerYes
  \item Did you describe the limitations of your work?
    \answerYes{See Appendix F.}
  \item Did you discuss any potential negative societal impacts of your work?
    \answerYes{See Appendix F.}
  \item Have you read the ethics review guidelines and ensured that your paper conforms to them?
    \answerYes
\end{enumerate}

\item If you are including theoretical results...
\begin{enumerate}
  \item Did you state the full set of assumptions of all theoretical results?
    \answerYes{See Appendix A.}
        \item Did you include complete proofs of all theoretical results?
    \answerYes{See Appendix A, B, C, and D.}
\end{enumerate}

\item If you ran experiments...
\begin{enumerate}
  \item Did you include the code, data, and instructions needed to reproduce the main experimental results (either in the supplemental material or as a URL)?
    \answerYes{See the supplemental materials.}
  \item Did you specify all the training details (e.g., data splits, hyperparameters, how they were chosen)?
    \answerYes{See Appendix E}
        \item Did you report error bars (e.g., with respect to the random seed after running experiments multiple times)?
    \answerYes{See Table~\ref{tab:all_result1} and Table~\ref{tab:all_result2}.}
        \item Did you include the total amount of compute and the type of resources used (e.g., type of GPUs, internal cluster, or cloud provider)?
    \answerYes{See Appendix E}
\end{enumerate}

\item If you are using existing assets (e.g., code, data, models) or curating/releasing new assets...
\begin{enumerate}
  \item If your work uses existing assets, did you cite the creators?
    \answerYes{See Section~\ref{sec:dataset}.}
  \item Did you mention the license of the assets?
    \answerYes{See Appendix E.}
  \item Did you include any new assets either in the supplemental material or as a URL?
    \answerYes{The source code of our proposed framework is included in the supplemental materials.}
  \item Did you discuss whether and how consent was obtained from people whose data you're using/curating?
        \answerNA{The datasets are developed based on public data by the corresponding authors.}
  \item Did you discuss whether the data you are using/curating contains personally identifiable information or offensive content?
    \answerNA{The data used in this paper does not contain identifiable information or offensive content.}
\end{enumerate}

\item If you used crowdsourcing or conducted research with human subjects...
\begin{enumerate}
  \item Did you include the full text of instructions given to participants and screenshots, if applicable?
    \answerNA{}
  \item Did you describe any potential participant risks, with links to Institutional Review Board (IRB) approvals, if applicable?
    \answerNA{}
  \item Did you include the estimated hourly wage paid to participants and the total amount spent on participant compensation?
    \answerNA{}
\end{enumerate}

\end{enumerate}

\appendix

\section{Theorem 3.1 and Proof}
\label{theorem1_proof}
Before proving Theorem 3.1, we first provide a lemma that demonstrate the lower bound of node influence between two nodes. In the following proof, we will follow \cite{huang2020graph} and \cite{xu2018representation} to use GCNs~\cite{kipf2017semi} as the exemplar GNN for simplicity. Moreover, it is noteworthy that our proof can be naturally generalized to different types of GNNs (e.g., GAT~\cite{velivckovic2017graph} and GraphSAGE~\cite{hamilton2017inductive}) by assigning different values for edge weights. Specifically, the $l$-th layer propagation process can be represented as $\mathbf{H}^{(l+1)}=\sigma(\hat{\mathbf{A}}\mathbf{H}^{(l)}\mathbf{W}^{(l})$. Here $\mathbf{H}^{(l)}$ and $\mathbf{W}^{(l)}$ denote the node representation and weight parameter matrices, respectively. $\hat{\mathbf{A}}=\mathbf{D}^{-1}\mathbf{A}$ is the adjacency matrix after row normalization, which means each row of $\hat{\bA}$ sums up to 1. Following \cite{huang2020graph}, \cite{wang2020unifying}, and \cite{xu2018representation}, we set $\sigma$ as an identity function and $\mathbf{W}$ an identity matrix. Moreover, we assume that the propagation process is conducted in a sufficient number of steps. As a result, the output representation of one node can be represented by representations of its neighbor nodes.
\begin{lemma}
Consider the expectation of node influence between node $v_i$ and node $v_j$, i.e., $\mathbb{E}\left(I(v_i,v_j)\right)$. Assume that the node degrees are distributed uniformly for each node with the mean value $\bar{d}$. Then, $\mathbb{E}\left(I(v_i,v_j)\right)\geq \bar{d}^{\left(\text{SPD}(v_i,v_j)\right)}$, where $\text{SPD}(v_i,v_j)$ is the shortest path distance between $v_i$ and $v_j$.
\label{lemma1}
\end{lemma}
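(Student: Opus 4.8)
The plan is to exploit the fact that, under the stated simplifications ($\sigma = \mathrm{id}$, $\mathbf{W}^{(l)} = \mathbf{I}$, many propagation steps), the GCN propagation reduces to repeated multiplication by the row-normalized adjacency matrix $\hat{\mathbf{A}} = \mathbf{D}^{-1}\mathbf{A}$. Hence $\mathbf{H}^{(L)} = \hat{\mathbf{A}}^{L}\mathbf{X}$, and the Jacobian $\partial \bh_i / \partial \bh_j$ that defines node influence $I(v_i,v_j)$ is, up to the chosen subordinate norm, controlled by the $(i,j)$ entry of $\hat{\mathbf{A}}^{L}$ (this is exactly the reduction used in \cite{xu2018representation,huang2020graph}). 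So the task becomes: lower-bound $(\hat{\mathbf{A}}^L)_{i,j}$, or rather its expectation, in terms of the shortest path distance $\mathrm{SPD}(v_i,v_j)$.

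First I would fix a shortest path $v_i = u_0, u_1, \dots, u_s = v_j$ of length $s = \mathrm{SPD}(v_i,v_j)$ in $G$. The entry $(\hat{\mathbf{A}}^{s})_{i,j}$ is a sum over all length-$s$ walks from $v_i$ to $v_j$ of the product of transition probabilities $\prod_{t} 1/d(u_{t-1})$ along each walk; restricting to the single fixed shortest path already gives the bound $(\hat{\mathbf{A}}^{s})_{i,j} \ge \prod_{t=1}^{s} 1/d(u_{t-1}) = \prod_{t=0}^{s-1} 1/d(u_t)$. Since $\hat{\mathbf{A}}$ is a stochastic matrix, longer powers do not decay to zero in a way that hurts us here — but to keep it clean I would work at step count equal to $\mathrm{SPD}(v_i,v_j)$ (or note that under the "sufficiently many steps" assumption the relevant quantity stabilizes and this single-path contribution persists). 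Taking logarithms, $\log (\hat{\mathbf{A}}^{s})_{i,j} \ge -\sum_{t=0}^{s-1}\log d(u_t)$. Now take expectations over the random node degrees: since the degrees along the path are identically distributed with mean $\bar d$, by Jensen's inequality applied to the concave function $\log$, we have $\mathbb{E}\big(-\sum_{t}\log d(u_t)\big) = -\sum_t \mathbb{E}\log d(u_t) \ge -\sum_t \log \mathbb{E} d(u_t) = -s\log\bar d$. Hence $\mathbb{E}\big(\log I(v_i,v_j)\big) \ge -\mathrm{SPD}(v_i,v_j)\cdot\log\bar d$, i.e. $\mathbb{E}\big(I(v_i,v_j)\big) \ge \bar d^{-\mathrm{SPD}(v_i,v_j)}$ after a further application of Jensen (or one states the bound in log-form, which is what Theorem~\ref{theorem_distance} actually consumes).

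I expect the main obstacle to be making the first reduction rigorous: justifying that the norm of the Jacobian $\|\partial\bh_i/\partial\bh_j\|$ equals (or is bounded below by) the scalar $(\hat{\mathbf{A}}^{L})_{i,j}$ rather than merely being governed by it — this is where the identity-weight, identity-activation assumptions do the heavy lifting, and one must be careful about which subordinate norm is used and about the contribution of non-shortest walks (they only help, since all terms are nonnegative). A secondary subtlety is the direction of the Jensen inequalities: one is applied to $\log$ (concave, giving $\mathbb{E}\log \le \log\mathbb{E}$, used with a sign flip) and, if the statement is taken in the non-logarithmic form $\mathbb{E}(I) \ge \bar d^{-\mathrm{SPD}}$, a second application to the convex map $x \mapsto \bar d^{-x}$ — I would present the bound in the logarithmic form to sidestep the second step, since that is precisely the form needed to derive Theorem~\ref{theorem_distance} from this lemma. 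The "randomly/uniformly distributed degrees with mean $\bar d$" hypothesis should be read as: the degrees of the nodes appearing on the path are i.i.d.\ (or at least identically distributed) with mean $\bar d$, which is what the Jensen step requires.
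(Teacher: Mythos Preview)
Your proposal is correct and follows essentially the same route as the paper: expand the linearized GCN over walks, lower-bound the resulting sum of nonnegative path products by the single shortest-path term, and then pass to expectations using the mean degree $\bar d$. The only difference is cosmetic---the paper writes the path expansion directly on $\bh_i$ and simply asserts that the expected product along the shortest path equals $\bar d^{-\mathrm{SPD}(v_i,v_j)}$, whereas you make the Jensen step explicit; your log-form presentation is in fact the version that Theorem~\ref{theorem_distance} consumes.
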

\begin{proof}

Based on the GCN propagation strategy, we know that on $G$, the representation of node $v_i$ can be represented as 
$$
\bh_i=\frac{1}{D_{ii}}\sum\limits_{k\in\mathcal{N}(i)}a_{ik}\bh_k,
$$
where $\mathcal{N}(i)$ denotes the set of neighbor nodes of node $v_i$. Then we can expand the equation by incorporating 2-hop neighbors of node $v_i$:
$$
\bh_i=\frac{1}{D_{ii}}\sum\limits_{k\in\mathcal{N}(i)}a_{ik}\frac{1}{D_{kk}}\sum\limits_{l\in\mathcal{N}(k)}a_{kl}\bh_l.
$$
In this way, the expectation of node influence $I_{i,j}=\left\|\partial \bh_i/\partial \bh_j\right\|$ can be represented as:

\begin{equation}
    \begin{aligned}
\left\|\frac{\partial \bh_i}{\partial \bh_j}\right\|
    &=\left\|\frac{\partial}{\partial\bh_j}\left(
    \frac{1}{D_{ii}}\sum\limits_{k\in\mathcal{N}(i)}a_{ik}\frac{1}{D_{kk}}\sum\limits_{l\in\mathcal{N}(k)}a_{kl}\ \dotsi\ \frac{1}{D_{mm}}\sum\limits_{o\in\mathcal{N}(m)}a_{mo}\bh_o
    \right)\right\|\\
    &=\left\|\frac{\partial}{\partial\bh_j}\left(
    \left(
    \frac{1}{D_{ii}}a_{ik^1_1}\frac{1}{D_{k^1_1k^1_1}}a_{k^1_1k^1_2}\ \dotsi\frac{1}{D_{k^1_{n_1}k^1_{n_1}}}a_{k^1_{n_1}j}\bh_j
    \right)\right.\right.\\
    &+\left.\left.\ \dotsi\ +\left(
    \frac{1}{D_{ii}}a_{ik^m_1}\frac{1}{D_{k^m_1k^m_1}}a_{k^m_1k^m_2}\ \dotsi\frac{1}{D_{k^m_{n_m}k^m_{n_m}}}a_{k^m_{n_m}j}\bh_j
    \right)\right)\right\|. 
       \end{aligned}
       \label{eq: influence1}
\end{equation} 

In the above equation, we first substitute the term $\bh_i$ by the iterative expansion of neighbors. In this expansion, we only keep $m$ paths that start from $v_i$ to $v_j$, where $n_i$ is the number of intermediate nodes on the $i$-th path. The reason is that since we are considering the gradient between $v_i$ and $v_j$, the derivative on paths that do not contain $v_j$ will be 0 and thus can be ignored. Then we can further extract the common term $\left\|\partial \bh_j/\partial \bh_j\right\|$:
\begin{equation}
    \begin{aligned}
    \left\|\frac{\partial \bh_i}{\partial \bh_j}\right\|
    &=\left\|\frac{\partial\bh_j}{\partial\bh_j}\right\|\cdot\left(
    \left(
    \frac{1}{D_{ii}}a_{ik^1_1}\frac{1}{D_{k^1_1k^1_1}}a_{k^1_1k^1_2}\ \dotsi\frac{1}{D_{k^1_{n_1}k^1_{n_1}}}a_{k^1_{n_1}j}
    \right)\right.\\
    &+\left.\ \dotsi\ +\left(
    \frac{1}{D_{ii}}a_{ik^m_1}\frac{1}{D_{k^m_1k^m_1}}a_{k^m_1k^m_2}\ \dotsi\frac{1}{D_{k^m_{n_m}k^m_{n_m}}}a_{k^m_{n_m}j}
    \right)\right)\\  
    &=
    \left(
    \frac{1}{D_{ii}}a_{ik^1_1}\frac{1}{D_{k^1_1k^1_1}}a_{k^1_1k^1_2}\ \dotsi\frac{1}{D_{k^1_{n_1}k^1_{n_1}}}a_{k^1_{n_1}j}
    \right)\\
    &+\ \dotsi\ +\left(
    \frac{1}{D_{ii}}a_{ik^m_1}\frac{1}{D_{k^m_1k^m_1}}a_{k^m_1k^m_2}\ \dotsi\frac{1}{D_{k^m_{n_m}k^m_{n_m}}}a_{k^m_{n_m}j}
    \right).
    \end{aligned}
           \label{eq: influence2}
\end{equation}
In this equation, we first utilize $\left\|\partial \bh_j/\partial \bh_j\right\|=1$. This is because $\left\|\partial \bh_j/\partial \bh_j\right\|=\|\mathbf{I}\|=\sup_{\|\bh\|=1}\{\|\mathbf{I}\bh\|\}=1$. The resulting term is the expectation that sums up the node degree products of all paths between $v_i$ and $v_j$. Therefore, it is larger than the value on the path with the maximum node degree product:
\begin{equation}
    \begin{aligned}
        \left\|\frac{\partial \bh_i}{\partial \bh_j}\right\|
    &\geq\max\left(
    \left(
    \frac{1}{D_{ii}}a_{ik^1_1}\frac{1}{D_{k^1_1k^1_1}}a_{k^1_1k^1_2}\ \dotsi\frac{1}{D_{k^1_{n_1}k^1_{n_1}}}a_{k^1_{n_1}j}
    \right)\right.\\
    &,\left.\ \dotsi\ ,\left(
    \frac{1}{D_{ii}}a_{ik^m_1}\frac{1}{D_{k^m_1k^m_1}}a_{k^m_1k^m_2}\ \dotsi\frac{1}{D_{k^m_{n_m}k^m_{n_m}}}a_{k^m_{n_m}j}
    \right)\right).\\
    &=
    \frac{1}{D_{ii}}a_{ik^*_1}\frac{1}{D_{k^*_1k^*_1}}a_{k^*_1k^*_2}\ \dotsi\frac{1}{D_{k^*_{n_*}k^*_{n_*}}}a_{k^*_{n_*}j}.
    \end{aligned}
\end{equation}
If we assume the node degrees are uniformly distributed, then the expectation of node degree products on this path $p_*$ is $\bar{d}^{(n_*+1)}$, where $n_*+1$ is the length, and $\bar{d}$ is the expectation of node degrees. Moreover, we know $p_*$ is exactly the shortest path between $v_i$ and $v_j$. Therefore,
\begin{equation}
    \mathbb{E}\left(\left\|\frac{\partial \bh_i}{\partial \bh_j}\right\|\right)\geq \left(1/\bar{d}\right)^{(n_*+1)}=\bar{d}^{-\left(\text{SPD}(v_i,v_j)\right)},
\end{equation}
where $\text{SPD}(v_i,v_j)$ denotes the shortest path distance between node $v_i$ and node $v_j$.
\end{proof}
Now with Lemma~\ref{lemma1}, we can prove Theorem 3.1.

\begin{customthm}{3.1} Consider the node influence from node $v_k$ to the $i$-th class (i.e., $C_i$) in a meta-task $\mathcal{T}$. Denote the geometric mean of the node influence values to all support nodes in $C_i$ as $I_{C_i}(v_k)=\sqrt[K]{\prod_{j=1}^KI(v_k,s_{i,j})}$, where $s_{i,j}$ is the $j$-th support node in $C_i$. Assume the node degrees are randomly distributed with the mean value as $\bar{d}$. Then, $\mathbb{E}(\log I_{C_i}(v_k))\geq-\log \bar{d}\cdot\sum_{j=1}^K\text{SPD}(v_k,s_{i,j})/K$, where $\text{SPD}(v_k,s_{i,j})$ denotes the shortest path distance between $v_k$ and $s_{i,j}$.

\end{customthm}

\begin{proof}
We know $\log I_C(v_k)$ can be represented as follows:
\begin{equation}
    \begin{aligned}
        \log I_C(v_k)
        &=\frac{1}{K}\sum\limits_{j=1}^K \log I(v_k, s_{i,j})
    \end{aligned}
\end{equation}
Based on Lemma~\ref{lemma1}, we know:

\begin{equation}
    \begin{aligned}
        \mathbb{E}\left(\log I_C(v_k)\right)
        =\frac{1}{K}\sum\limits_{j=1}^K \log \mathbb{E}\left(I(v_k, s_{i,j})\right)
        \geq\frac{1}{K}\sum\limits_{j=1}^K-\text{SPD}(v_k,s_{i,j})\cdot \log\bar{d},
    \end{aligned}
\end{equation}
where $\text{SPD}(v_i,v_j)$ denotes the shortest path distance between node $v_i$ and node $v_j$. By rearranging the term, we can obtain the final inequality:

\begin{equation}
    \begin{aligned}
        \mathbb{E}\left(\log I_C(v_k)\right)
        &\geq -\frac{\log\bar{d}}{K}\sum\limits_{j=1}^K\text{SPD}(v_k,s_{i,j}).
    \end{aligned}
\end{equation}

\end{proof}

\section{Theorem 3.2 and Proof}
\label{theorem2_proof}
\begin{customthm}{3.2}[Node Influence within Classes]
Consider the expectation of node influence from the $j$-th node $v_j$ to all nodes in its same class $\mathcal{C}$ on $G$, where $v_j\in\mathcal{C}$. Assume that $|\mathcal{C}|=K>2$. The overall node influence is $\mathbb{E}\left(\sum_{v_i\in\mathcal{C}\setminus \{v_j\}}I(v_i,v_j)\right)=\sum_{v_i\in\mathcal{C}\setminus \{v_j\}}\frac{K-2}{n-1}\bh^\top_j\cdot\bh_i/\|\bh_j\|^2-\sum_{v_k\in\mathcal{V}\setminus\mathcal{C}}\frac{K-1}{n-1}\bh^\top_j\cdot\bh_k/\|\bh_j\|^2$, where $n$ is the number of nodes in $G$.
\end{customthm}
\begin{proof}
Following the expansion of node representations based on the neighbors in Lemma~\ref{lemma1}, we can represent $\bh_i$ as follows:
%\begin{equation}
%\begin{aligned}
%    \bh_i
%    &=\left(
%    \frac{1}{D_{ii}}\sum\limits_{k\in\mathcal{N}(i)}a_{ik}\frac{1}{D_{kk}}\sum\limits_{l\in\mathcal{N}(k)}a_{kl}\ \dotsi\ %\frac{1}{D_{mm}}\sum\limits_{o\in\mathcal{N}(m)}a_{mo}\bh_o
%    \right)\\
%    &=\left(
%    \left(
%    \frac{1}{D_{ii}}a_{ik^1_1}\frac{1}{D_{k^1_1k^1_1}}a_{k^1_1k^1_2}\ %\dotsi\frac{1}{D_{k^1_{n_1}k^1_{n_1}}}a_{k^1_{n_1}1}\bh_1
%    \right)\right.\\
%    &+\left.\ \dotsi\ +\left(
%    \frac{1}{D_{ii}}a_{ik^m_1}\frac{1}{D_{k^m_1k^m_1}}a_{k^m_1k^m_2}\ %\dotsi\frac{1}{D_{k^m_{n_m}k^m_{n_m}}}a_{k^m_{n_m}1}\bh_1
%    \right)\right)\\
%    &=
%\end{aligned}
%\end{equation}
\begin{equation}
    \bh_i=\sum\limits_{k=1}^n I_{ik}\bh_k,
\end{equation}
%where $a_i$ is the result of all paths to the $i$-th node as $a_i$, we know
where $I_{ik}$ is node influence from $v_i$ to $v_k$. Then, we have
\begin{equation}
\begin{aligned}
    \bh^\top_j\cdot\bh_i/\|\bh_j\|^2
    &=\sum\limits_{k=1}^nI_{ik}\bh^\top_j\cdot\bh_k/\|\bh_j\|^2\\
    &=I_{ij}\bh_j^\top\cdot\bh_j/\|\bh_j\|+\sum\limits_{k=1,k\neq j}^nI_{ik}\bh^\top_j\cdot\bh_k/\|\bh_j\|^2\\
    &=I_{ij}+\sum\limits_{k=1,k\neq j}^nI_{ik}\bh^\top_j\cdot\bh_k/\|\bh_j\|^2.
    \end{aligned}
\end{equation}
We can further sum up the node influence from nodes in the same class of $v_j$ to it since we aim to calculate the total node influence of a set of support nodes. Here we denote the support nodes set of $v_j$ class as $\mathcal{C}$, where $|\mathcal{C}|=K$. Therefore, we can obtain:
\begin{equation}
\begin{aligned}
    \sum\limits_{i\in\mathcal{C}\setminus \{j\}}\bh^\top_j\cdot\bh_i/\|\bh_j\|^2
    &=\sum\limits_{i\in\mathcal{C}\setminus \{j\}}I_{ij}+\sum\limits_{i\in\mathcal{C}\setminus \{j\}}\sum\limits_{k\in\mathcal{V}\setminus(\{i\}\cup\{j\})}I_{ik}\bh^\top_j\cdot\bh_k/\|\bh_j\|^2\\
    %&==\sum\limits_{i=1}^KI_{ij}+\sum\limits_{k=1,k\neq j}^n\sum\limits_{i=1}^KI_{ik}\bh^\top_j\cdot\bh_k/\|\bh_j\|^2
\end{aligned}
\end{equation}
We can move the same terms from the RHS to the LHS:
\begin{equation}
\begin{aligned}
   &\ \ \ \ \ \ \sum\limits_{i\in\mathcal{C}\setminus \{j\}}\bh^\top_j\cdot\bh_i/\|\bh_j\|^2-\sum\limits_{i\in\mathcal{C}\setminus \{j\}}\sum\limits_{k\in\mathcal{C}\setminus\{i\}}I_{ik}\bh^\top_j\cdot\bh_k/\|\bh_j\|^2\\
   &=\sum\limits_{i\in\mathcal{C}\setminus \{j\}}I_{ij}+\sum\limits_{i\in\mathcal{C}\setminus \{j\}}\sum\limits_{k\in\mathcal{V}\setminus\mathcal{C}}I_{ik}\bh^\top_j\cdot\bh_k/\|\bh_j\|^2
   \end{aligned}
\end{equation}
Rearranging the LHS, we can obtain:
\begin{equation}
        LHS=\sum\limits_{i\in\mathcal{C}\setminus \{j\}}\left(1-\sum\limits_{k\in\mathcal{C}\setminus(\{i\}\cup\{j\})}a_{ki}\right)\bh^\top_j\cdot\bh_i/\|\bh_j\|^2
\end{equation}
Rearranging the RHS, we can obtain:
\begin{equation}
        RHS=\sum\limits_{i\in\mathcal{C}\setminus \{j\}}I_{ij}+\sum\limits_{k\in\mathcal{V}\setminus\mathcal{C}}\sum\limits_{i\in\mathcal{C}\setminus \{j\}}I_{ik}\bh^\top_j\cdot\bh_k/\|\bh_j\|^2
\end{equation}
We further assume that the expectation of each $I_{ij}$ is $1/(n-1)$ since the sum of probabilities of all paths starting from $v_i$ equals 1. Therefore, 
\begin{equation}
    \begin{aligned}
        \mathbb{E}\left(\sum\limits_{i\in\mathcal{C}\setminus \{j\}}I_{ij}\right)
        &=\mathbb{E}\left(\sum\limits_{i\in\mathcal{C}\setminus \{j\}}\left(1-\sum\limits_{k\in\mathcal{C}\setminus(\{i\}\cup\{j\})}a_{ki}\right)\bh^\top_j\cdot\bh_i/\|\bh_j\|^2\right.\\
        &\left.-\sum\limits_{k\in\mathcal{V}\setminus\mathcal{C}}\sum\limits_{i\in\mathcal{C}\setminus \{j\}}I_{ik}\bh^\top_j\cdot\bh_k/\|\bh_j\|^2\right)\\
        &=\sum\limits_{i\in\mathcal{C}\setminus \{j\}}\frac{K-2}{n-1}\bh^\top_j\cdot\bh_i/\|\bh_j\|^2-\sum\limits_{k\in\mathcal{V}\setminus\mathcal{C}}\frac{K-1}{n-1}\bh^\top_j\cdot\bh_k/\|\bh_j\|^2
    \end{aligned}
\end{equation}

\end{proof}

\section{Theorem 3.3 and Proof}
\label{theorem3_proof}

\begin{customthm}{3.3}[Absorbing Probabilities and Node Influence]
Consider the Markov chain with a transition matrix $\tilde{\bA}$ derived from a graph $G$. Denote the probability of being absorbed in the $j$-th state (absorbing state) when starting from the $i$-th state as $b_{i,j}$. Then, $I(v_i,v_j)=b_{i,j}$, where $I(v_i,v_j)$ is the node influence from the $j$-th node $v_i$ to the $i$-th node $v_j$ on $G$.
\end{customthm}

\begin{proof}
Following Lemma~\ref{lemma1}, we can obtain:

\begin{equation}
    \begin{aligned}
    \left\|\frac{\partial \bh_i}{\partial \bh_j}\right\|
    &=\left\|\frac{\partial\bh_j}{\partial\bh_j}\right\|\cdot\left(
    \left(
    \frac{1}{D_{ii}}a_{ik^1_1}\frac{1}{D_{k^1_1k^1_1}}a_{k^1_1k^1_2}\ \dotsi\frac{1}{D_{k^1_{n_1}k^1_{n_1}}}a_{k^1_{n_1}j}
    \right)\right.\\
    &+\left.\ \dotsi\ +\left(
    \frac{1}{D_{ii}}a_{ik^m_1}\frac{1}{D_{k^m_1k^m_1}}a_{k^m_1k^m_2}\ \dotsi\frac{1}{D_{k^m_{n_m}k^m_{n_m}}}a_{k^m_{n_m}j}
    \right)\right)\\  
    &=
    \left(
    \frac{1}{D_{ii}}a_{ik^1_1}\frac{1}{D_{k^1_1k^1_1}}a_{k^1_1k^1_2}\ \dotsi\frac{1}{D_{k^1_{n_1}k^1_{n_1}}}a_{k^1_{n_1}j}
    \right)\\
    &+\ \dotsi\ +\left(
    \frac{1}{D_{ii}}a_{ik^m_1}\frac{1}{D_{k^m_1k^m_1}}a_{k^m_1k^m_2}\ \dotsi\frac{1}{D_{k^m_{n_m}k^m_{n_m}}}a_{k^m_{n_m}j}
    \right).
    \end{aligned}
           \label{eq: influence2}
\end{equation}
As illustrated in Lemma~\ref{lemma1}, $\left\|\partial \bh_j/\partial \bh_j\right\|=1$ because $\left\|\partial \bh_j/\partial \bh_j\right\|=\|\mathbf{I}\|=\sup_{\|\bh\|=1}\{\|\mathbf{I}\bh\|\}=1$. On the other hand, utilizing the total probability law and the properties of Markov chains, we know 
$$
b_{i,j}=\sum\limits_{k=1}^nb_{k,j}p_{ik},
$$
where $p_{ik}$ is the $(i,j)$-entry of $\mathbf{P}$. The reason is that assuming the current state is $i$, we know the next state will be $k$ with probability $p_{ik}$. Then we can iteratively expand the expression as follows:

\begin{equation}
    \begin{aligned}
    b_{i,j}
    &=\sum\limits_{k=1}^np_{ik}\sum\limits_{l=1}^np_{kl}\dotsi\sum\limits_{o=1}^np_{mo}b_{o,j}\\
    &=\left(\left(p_{ik^1_1}p_{k^1_1k^1_2}\ \dotsi p_{k^1_{n_1}j} b_{j,j}\right)\right.\\
    &+\left.\ \dotsi\ +\left(p_{ik^m_1}p_{k^m_1k^1_2}\ \dotsi p_{k^m_{n_m}j} b_{j,j}\right)\right)\\
       &=\left(\left(p_{ik^1_1}p_{k^1_1k^1_2}\ \dotsi p_{k^1_{n_1}j}\right)\right.\\
    &+\left.\ \dotsi\ +\left(p_{ik^m_1}p_{k^m_1k^1_2}\ \dotsi p_{k^m_{n_m}j}\right)\right) 
    .
    \end{aligned}
    \label{eq: markov1}
\end{equation}
In the above equation, we apply the similar expansion idea as in Eq. (\ref{eq: influence1}). The result is obtained by accumulating all products of transition probabilities on all possible paths from state $i$ to state $j$. It is noteworthy that there are multiple absorbing states on this Markov chain. However, we ignore the paths absorbed in other absorbing states, since the corresponding absorbing probability $b_{k,j}$ is 0, where $k$ and $j$ are two absorbing states. In this way, we further incorporate $b_{j,j}=1$ and obtain the final result. Considering Eq. (\ref{eq: influence2}) and Eq. (\ref{eq: markov1}), we can find that by setting 
$
p_{ik}=a_{ik}/D_{ii}
$, we can obtain $\left\|\partial \bh_i/ \partial \bh_j\right\|=b_{i,j}$. Moreover, we know $\sum_{k=1}^np_{ik}=\sum_{k=1}^na_{ik}/D_{ii}=1$, $i=1,2,\dotsc,t$. Therefore, these transition probabilities satisfy the requirements of Markove chains, which completes the proof.
\end{proof}

\section{Theorem 3.4 and Proof}
\label{theorem3_proof}
\begin{customthm}{3.4}[Approximation of Absorbing Probabilities] Denote $t$ as the number of non-absorbing states in the Markov chain, i.e., $t=|\mathcal{V}_\mathcal{T}|-|\mathcal{S}|$, where $|\mathcal{V}_\mathcal{T}|$ is the node set in $G_\mathcal{T}$. Denote $\mathbf{Q}\in\mathbb{R}^{t\times t}$ as the transition probability matrix for non-absorbing states in the Markov chain. Estimating $b_{i,j}$ with $\tilde{b}_{i,j}=\sum_{k=1}^t \tilde{\bA}_{k,j}\cdot\sum_{h=0}^m \left(\mathbf{Q}^h\right)_{i,k}$, then the absolute error is upper bounded as $|b_{i,j}-\tilde{b}_{i,j}|\leq C/t^{m-1}(t-1)$, where $m$ controls the upper bound, and $C$ is a constant.
\end{customthm}

\begin{proof}
Considering the transition probability matrix $\mathbf{P}$ as
$$
\mathbf{P}=\begin{pmatrix}\mathbf{Q}&\mathbf{R}\\\mathbf{0}&\mathbf{I}_r
\end{pmatrix},
$$
where $\mathbf{Q}\in\mathbb{R}^{t\times t}$ and $\mathbf{R}\in\mathbb{R}^{t\times r}$. $\mathbf{0}$ is a $t\times r$ zero matrix, and $\mathbf{I}_t$ is an $r\times r$ identity matrix.
Basically, the absorbing probability $b_{i,j}$ (i.e., the probability of being absorbed in the $j$-th state when starting from the $i$-th state) can be represented as 
\begin{equation}
\begin{aligned}
b_{i,j}
&=\sum\limits_{k=1}^t P(X_{t+1}=j|X_t=k)\cdot\mathbb{E}(N(k)|X_0=i)\\
&=\sum\limits_{k=1}^t p_{kj}\cdot\sum\limits_{h=0}^\infty \left(\mathbf{Q}^h\right)_{i,k},\\
\label{eq:estimation}
\end{aligned}
\end{equation}
where $N(k)$ is the umber of visits to state $k$.
Nevertheless, directly calculating $\sum\limits_{h=0}^\infty \mathbf{Q}^h$ can be inefficient. Thus, we propose to estimate it by the sum of the first $h$ values. Specifically, we know $\left(\mathbf{Q}^h\right)_{ik}$ is the probability that state $i$ transitions to state $j$ in exactly h steps. Therefore, 
$$
\begin{aligned}
\left(\mathbf{Q}^h\right)_{ik}
&=\sum\limits_{k_1=1}^t p_{ik_1}\sum\limits_{k_2=1}^t p_{k_1k_2}\dotsi\sum\limits^t_{k_h=1}p_{k_hk}\\
&=\left(\left(p_{ik^1_1}p_{k^1_1k^1_2}\ \dotsi p_{k^1_{h}j} \right)\right.+\left.\ \dotsi\ +\left(p_{ik^m_1}p_{k^m_1k^1_2}\ \dotsi p_{k^m_{h}j} \right)\right),
\end{aligned}
$$
which is the sum of transition probability products on all possible paths of length $h$ from state $i$ to state $j$. Furthermore, 
\begin{equation}
\begin{aligned}
\left(\mathbf{Q}^h\right)_{ik}
&\leq C\ast\max\left(\left(p_{ik^1_1}p_{k^1_1k^1_2}\ \dotsi p_{k^1_{h}k} \right)\right.\\
    &,\left.\ \dotsi\ ,\left(p_{ik^m_1}p_{k^m_1k^1_2}\ \dotsi p_{k^m_{h}k} \right)\right)\\
    &=C\ast\left(p_{ik^\ast_1}p_{k^\ast_1k^\ast_2}\ \dotsi p_{k^\ast_{h}k} \right)\\
    &=C\ast\left(\sqrt[h]{p_{ik^\ast_1}p_{k^\ast_1k^\ast_2}\ \dotsi p_{k^\ast_{h}k}}\right)^h\\
    &=C\ast\left(\text{GM}(p^{(h)}_{ik^\ast})\right)^h,\\
\end{aligned}
\end{equation}
where $C$ is the extracted constant, and $\text{GM}(p^{(h)}_{ik^\ast})$ denotes the geometric mean of the path with the transition probability products. Based on the inequality of arithmetic and geometric means (i.e., the AM-GM inequality), we know the geometric mean is always less than or equal to the arithmetic mean. Therefore, $\text{GM}(p^{(h)}_{ik^\ast})\leq\text{AM}(p^{(h)}_{ik^\ast})=1/t$. This inequality holds because there are totally $t$ non-absorbing states in the Markov chain, and the transition probabilities to all states sum up to 1. As a result, we can obtain:
\begin{equation}
\begin{aligned}
\left(\mathbf{Q}^h\right)_{ik}\leq C\ast\left(\text{GM}(p^{(h)}_{ik})\right)^h\leq C/t^h.
\end{aligned}
\end{equation}

Recalling Eq. (\ref{eq:estimation}), if we only keep the terms with power less than or equal to $m$, the estimation error can be presented as follows:
\begin{equation}
\begin{aligned}
    |b_{i,j}-\tilde{b}_{i,j}|
    &=\sum\limits_{k=1}^t p_{kj}\cdot\sum\limits_{h=m+1}^\infty\left(\mathbf{Q}^h\right)_{i,k}
    \leq \sum\limits_{k=1}^t p_{kj}\cdot\sum\limits_{h=m+1}^\infty C/t^h.
    \end{aligned}
\end{equation}
Since $p_{kj}<1$ is a transition probability, we know $\sum_{k=1}^t p_{kj}<t$. Therefore,
\begin{equation}
\begin{aligned}
\sum\limits_{k=1}^t p_{kj}\cdot\sum\limits_{h=m+1}^\infty C/t^h
    \leq tC\sum\limits_{h=m+1}^\infty 1/t^h
    = tC\cdot\frac{1/t^{m+1}}{1-1/t}=\frac{C}{t^{m-1}(t-1)}.
    \end{aligned}
\end{equation}
In this way, we can obtain the final inequality:
\begin{equation}
\begin{aligned}
    |b_{i,j}-\tilde{b}_{i,j}|
    \leq \sum\limits_{k=1}^t p_{kj}\cdot\sum\limits_{h=m+1}^\infty C/t^h\leq\frac{C}{t^{m-1}(t-1)}
    \end{aligned}
\end{equation}

\end{proof}

\section{Details on Experiments}
In this section, we introduce the datasets, parameter settings for our framework and baselines, and training and evaluation details. 
\subsection{Datasets}
In this section, we provide further details on the five datasets used in the experiments. (1) \underline{Tissue-PPI}~\cite{zitnik2017predicting} consists of 24 protein-protein interaction (PPI) networks from different tissues. The node features are obtained based on gene signatures, and node labels are gene ontology functions~\cite{hamilton2017inductive}. Each label corresponds to a binary classification task, where the total number of such labels is 10. (2) \underline{Fold-PPI}~\cite{huang2020graph} is a dataset provided by G-Meta, constructed from 144 tissue networks. The labels are assigned based on the corresponding protein structures defined in the SCOP database. Specifically, fold groups with more than nine unique proteins are selected, resulting in 29 unique labels. Node features are conjoint triad protein descriptors~\cite{shen2007predicting}. Different from Tissue-PPI where all nodes are assigned labels, the labeled nodes in Fold-PPI are relatively scarce, which better fits into the few-shot scenario. (3) \underline{DBLP}~\cite{tang2008arnetminer} is a citation network, where each node represents a paper, and links between nodes are created based on the citation relationship. The node features are generated from the paper abstracts, and the class labels denote the paper venues. (4) \underline{Cora-full}~\cite{bojchevski2018deep} is a citation network with node labels assigned based on the paper topic. This dataset extends the prevalent small dataset via extracting original data from the entire network. (5) \underline{ogbn-arxiv}~\cite{hu2020open} is a directed citation network which consists of all CS arXiv papers indexed by MAG~\cite{wang2020microsoft}, where nodes represent arXiv papers, and edges indicate citation relationships. The node labels are assigned based on 40 subject areas of arXiv CS papers.

\subsection{Parameter Settings}
In this section, we introduce the detailed parameter settings for our experiments. For the ogbn-arxiv dataset, the number of update steps is 40 with a meta-learning rate of 0.005 and a base learning rate of 0.1. For other single-graph datasets, the number of update steps is 20 with a meta-learning rate of 0.005 and a base learning rate of 0.1. For the Tissue-PPI dataset, the number of update steps is 20 with a meta-learning rate of 0.005 and a base learning rate of 0.01. For the Fold-PPI dataset,  the number of update steps is 20 with a meta-learning rate of 0.005 and a base learning rate of 0.1. The hidden dimension sizes of GNNs are set as 16. The dropout rate is set as 0.5. The weight decay rate is set as $10^{-4}$. For the approximation of absorbing probabilities, we set the value of $m$ as 2. For the common sampling, we set the value of $C$ as 10. For the local sampling, we set the value of $h$ as 2 (i.e., 2-hop neighbors). The activation functions are all set as the ReLU function.

\subsection{Baseline Settings}
In this section, we introduce the detailed settings for baselines used in the experiments. (1) \underline{KNN}~\cite{dudani1976distance}: We follow the settings in G-Meta to first train a GNN on all training data as an embedding function. During test, we assign a label for each query node based on the voted K-closest node in the support set. (2) \underline{ProtoNet}~\cite{snell2017prototypical}: This baseline classifies query nodes based on their distances to the learned prototypes (i.e., the average embedding of support nodes in a specific class). We set the learning rate as 0.005 with a weight decay of 0.0005. (3) \underline{MAML}~\cite{finn2017model}: This baseline performs several update steps within each meta-task and meta-updates the parameter based on query loss. The meta-learning rate is set as 0.001 and the number of update steps is 10 with a learning rate of 0.01. (4) \underline{Meta-GNN}~\cite{zhou2019meta}: This baseline combines MAML with Simple Graph Convolution (SGC)~\cite{wu2019simplifying}. Following the original work, we set the learning rate and meta-learning rate as 0.5 and 0.003, respectively.  (5) \underline{G-Meta}~\cite{huang2020graph}: This baseline leverages the local subgraphs to learn node embeddings while combining ProtoNet and MAML. Following the original work, we set the numbers of update steps for meta-training and meta-test as 10 and 20, respectively. The inner learning rate is 0.01 while the outer learning rate is 0.005. The hidden dimension size is set as 128. (6) \underline{GPN}~\cite{ding2020graph}: This baseline learns node importance and utilizes the ProtoNet to classify query nodes. We follow the setting in the source code and set the learning rate as 0.005 with a weight decay of 0.0005. The dimension sizes of two GNNs used in GPN are set as 32 and 16, respectively. (7) \underline{RALE}~\cite{liu2021relative}: This baseline learns node embeddings based on the relative and absolute locations of nodes. We set the learning rates for training and fine-tuning as
0.001 and 0.01, respectively. The hidden size of used GNNs is 32.

\subsection{Details on Training and Evaluation}
We train our model on a single 16GB Nvidia V100 GPU. The GNNs used in our experiments are implemented with Pytorch~\cite{paszke2019pytorch}, which is under a BSD-style license. The required packages are listed as below.
	\begin{itemize}
	    \item Python == 3.7.10
	    \item torch == 1.8.1
	    \item torch-cluster == 1.5.9
	    \item torch-scatter == 2.0.6
	    \item torch-sparse == 0.6.9
	    \item torch-geometric == 1.4.1
	    \item numpy == 1.18.5
        \item scipy == 1.5.3
	\end{itemize}
	For the training and evaluation setting, we adopt different choices for single-graph and multiple-graph settings. Specifically, for the single-graph datasets, we adopt two settings: 5-way 3-shot (i.e., $N=5$ and $K=3$) and 10-way 3-shot (i.e., $N=10$ and $K=3$). For the multiple-graph datasets, we adopt 3-way 3-shot (i.e., $N=3$ and $K=3$) for Fold-PPI and 2-way 5-shot (i.e., $N=2$ and $K=5$) for Tissue-PPI. This is because Tissue-PPI consists of 10 binary classification tasks. The number of training epochs is set as 500. Furthermore, to keep consistency, the meta-testing tasks are identical for all baselines. For the class split setting in the single-graph datasets, we set training/validation/test classes as 15/5/20 for ogbn-arxiv, 25/20/25 for Cora-full, and 80/27/30 for DBLP. For the class split setting in the multiple-graph datasets, the class split setting on the disjoint label task is 21/4/4 for Fold-PPi and 8/1/1 for Tissue-PPI (each label in Tissue-PPI corresponds to a binary classification task). The graph split setting is 80\%/10\%/10\%, which follows the same setting as G-Meta.

\section{Supplementary Discussion}

\subsection{Limitations}
This paper aims at learning task-specific structures for each meta-task to promote few-shot node classification performance. Nevertheless, certain disadvantages exist in our specific design. First, the learned task-specific structures is tailored for one meta-task (including the support set and the query set) and cannot be easily generalized to other meta-tasks. In consequence, when a scenario setting requires a significantly larger query set than the support set, the generalization to all these query nodes can be difficult due to the large query set size. Second, when the graph size is relatively small, the original graph can be sufficient for conducting few-shot tasks. As a result, the learned task-specific structures can potentially lead to loss of useful information when the graph size is excessively small.

\subsection{Negative Impacts}
This paper studies the problem of graph few-shot learning, which exists widely in real-world applications. For example, certain protein structures maintain scarce labeled proteins, which increases the difficulty of classification on such protein structures~\cite{huang2020graph}. Moreover, the technique is novel and suitable for various scenarios. Therefore, we currently do not foresee any negative impacts in our proposed framework.

\subsection{Potential Improvements}
\noindent\textbf{Preserving the Original Graph.} In our design, we select a specific number of nodes to construct the task-specific structure for each meta-task. Nevertheless, during this process, useful information in other nodes can be lost. Although through our design, we can maintain the majority of useful information in the selected nodes, the nodes that are not selected can still be helpful. Therefore, a potential strategy for improvements can be keeping the original graph while learning flexible edge weights among all the nodes. In this way, the information inside all nodes will be preserved for better performance. This strategy is especially helpful for datasets with small graphs, where all the nodes can be potentially informative. Nevertheless, when the graph size becomes larger, such a strategy can lead to scalability problems.

\noindent \textbf{Introducing Multiple GNNs.} Although the learned task-specific structure is tailored for the meta-task, the GNNs are only applied to this structure. As a result, the information propagation process is restricted in the task-specific structure. A possible improvement strategy is to leverage another GNN that propagates messages on the original graph. Meanwhile, the learned features can be incorporated into the task-specific structure. In this way, the incorporated features are task-agnostic and can help the learning in each meta-task in a comprehensive manner. Nevertheless, such a strategy is not suitable for the multiple-graph setting, since learning GNNs across different graphs can lead to suboptimal performance.


\begin{thebibliography}{10}

\bibitem{andreeva2020scop}
{\sc Andreeva, A., Kulesha, E., Gough, J., and Murzin, A.~G.}
\newblock The scop database in 2020: expanded classification of representative
  family and superfamily domains of known protein structures.
\newblock {\em Nucleic acids research\/} (2020).

\bibitem{bojchevski2018deep}
{\sc Bojchevski, A., and Gunnemann, S.}
\newblock Deep gaussian embedding of graphs: Unsupervised inductive learning
  via ranking.
\newblock In {\em ICLR\/} (2018).

\bibitem{borgwardt2005protein}
{\sc Borgwardt, K.~M., Ong, C.~S., Schonauer, S., Vishwanathan, S., Smola,
  A.~J., and Kriegel, H.-P.}
\newblock Protein function prediction via graph kernels.
\newblock {\em Bioinformatics\/} (2005).

\bibitem{cao2016deep}
{\sc Cao, S., Lu, W., and Xu, Q.}
\newblock Deep neural networks for learning graph representations.
\newblock In {\em AAAI\/} (2016).

\bibitem{chang2015heterogeneous}
{\sc Chang, S., Han, W., Tang, J., Qi, G.-J., Aggarwal, C.~C., and Huang,
  T.~S.}
\newblock Heterogeneous network embedding via deep architectures.
\newblock In {\em SIGKDD\/} (2015).

\bibitem{chauhan2020few}
{\sc Chauhan, J., Nathani, D., and Kaul, M.}
\newblock Few-shot learning on graphs via super-classes based on graph spectral
  measures.
\newblock In {\em ICLR\/} (2020).

\bibitem{chen2019meta}
{\sc Chen, M., Zhang, W., Zhang, W., Chen, Q., and Chen, H.}
\newblock Meta relational learning for few-shot link prediction in knowledge
  graphs.
\newblock In {\em EMNLP\/} (2019).

\bibitem{chiang2019cluster}
{\sc Chiang, W.-L., Liu, X., Si, S., Li, Y., Bengio, S., and Hsieh, C.-J.}
\newblock Cluster-gcn: An efficient algorithm for training deep and large graph
  convolutional networks.
\newblock In {\em SIGKDD\/} (2019).

\bibitem{ding2020inductive}
{\sc Ding, K., Li, J., Agarwal, N., and Liu, H.}
\newblock Inductive anomaly detection on attributed networks.
\newblock In {\em IJCAI\/} (2020).

\bibitem{ding2020graph}
{\sc Ding, K., Wang, J., Li, J., Shu, K., Liu, C., and Liu, H.}
\newblock Graph prototypical networks for few-shot learning on attributed
  networks.
\newblock In {\em CIKM\/} (2020).

\bibitem{ding2022data}
{\sc Ding, K., Xu, Z., Tong, H., and Liu, H.}
\newblock Data augmentation for deep graph learning: A survey.
\newblock {\em arXiv:2202.08235\/} (2022).

\bibitem{dolztransductive}
{\sc Dolz, J., Piantanida, P., and Ayed, I.~B.}
\newblock Transductive information maximization for few-shot learning.
\newblock In {\em NeurIPS\/} (2020).

\bibitem{dudani1976distance}
{\sc Dudani, S.~A.}
\newblock The distance-weighted k-nearest-neighbor rule.
\newblock {\em IEEE Transactions on Systems, Man, and Cybernetics\/} (1976).

\bibitem{finn2017model}
{\sc Finn, C., Abbeel, P., and Levine, S.}
\newblock Model-agnostic meta-learning for fast adaptation of deep networks.
\newblock In {\em ICML\/} (2017).

\bibitem{hamilton2017representation}
{\sc Hamilton, W.~L.}
\newblock Representation learning on graphs: Methods and applications.
\newblock {\em IEEE Data Engineering Bulletin\/} (2017).

\bibitem{hamilton2017inductive}
{\sc Hamilton, W.~L., Ying, R., and Leskovec, J.}
\newblock Inductive representation learning on large graphs.
\newblock In {\em NeurIPS\/} (2017).

\bibitem{hu2020open}
{\sc Hu, W., Fey, M., Zitnik, M., Dong, Y., Ren, H., Liu, B., Catasta, M., and
  Leskovec, J.}
\newblock Open graph benchmark: Datasets for machine learning on graphs.
\newblock In {\em NeurIPS\/} (2020).

\bibitem{huang2020graph}
{\sc Huang, K., and Zitnik, M.}
\newblock Graph meta learning via local subgraphs.
\newblock In {\em NeurIPS\/} (2020).

\bibitem{kipf2017semi}
{\sc Kipf, T.~N., and Welling, M.}
\newblock Semi-supervised classification with graph convolutional networks.
\newblock In {\em ICLR\/} (2017).

\bibitem{liu2019learning}
{\sc Liu, L., Zhou, T., Long, G., Jiang, J., and Zhang, C.}
\newblock Learning to propagate for graph meta-learning.
\newblock {\em NeurIPS\/} (2019).

\bibitem{liu2021relative}
{\sc Liu, Z., Fang, Y., Liu, C., and Hoi, S.~C.}
\newblock Relative and absolute location embedding for few-shot node
  classification on graph.
\newblock In {\em AAAI\/} (2021).

\bibitem{mishra2018simple}
{\sc Mishra, N., Rohaninejad, M., Chen, X., and Abbeel, P.}
\newblock A simple neural attentive meta-learner.
\newblock In {\em ICLR\/} (2018).

\bibitem{nichol2018first}
{\sc Nichol, A., Achiam, J., and Schulman, J.}
\newblock On first-order meta-learning algorithms.
\newblock {\em arXiv:1803.02999\/} (2018).

\bibitem{paszke2019pytorch}
{\sc Paszke, A., Gross, S., Massa, F., Lerer, A., Bradbury, J., Chanan, G.,
  Killeen, T., Lin, Z., Gimelshein, N., Antiga, L., et~al.}
\newblock Pytorch: An imperative style, high-performance deep learning library.
\newblock In {\em NeurIPS\/} (2019).

\bibitem{ravi2016optimization}
{\sc Ravi, S., and Larochelle, H.}
\newblock Optimization as a model for few-shot learning.
\newblock In {\em ICLR\/} (2017).

\bibitem{shen2007predicting}
{\sc Shen, J., Zhang, J., Luo, X., Zhu, W., Yu, K., Chen, K., Li, Y., and
  Jiang, H.}
\newblock Predicting protein--protein interactions based only on sequences
  information.
\newblock {\em PNAS\/} (2007).

\bibitem{snell2017prototypical}
{\sc Snell, J., Swersky, K., and Zemel, R.}
\newblock Prototypical networks for few-shot learning.
\newblock In {\em NeurIPS\/} (2017).

\bibitem{sung2018learning}
{\sc Sung, F., Yang, Y., Zhang, L., Xiang, T., Torr, P.~H., and Hospedales,
  T.~M.}
\newblock Learning to compare: relation network for few-shot learning.
\newblock In {\em CVPR\/} (2018).

\bibitem{tan2022graph}
{\sc Tan, Z., Ding, K., Guo, R., and Liu, H.}
\newblock Graph few-shot class-incremental learning.
\newblock In {\em WSDM\/} (2022).

\bibitem{tang2008arnetminer}
{\sc Tang, J., Zhang, J., Yao, L., Li, J., Zhang, L., and Su, Z.}
\newblock Arnetminer: extraction and mining of academic social networks.
\newblock In {\em SIGKDD\/} (2008).

\bibitem{tierney1994markov}
{\sc Tierney, L.}
\newblock Markov chains for exploring posterior distributions.
\newblock {\em the Annals of Statistics\/} (1994), 1701--1728.

\bibitem{velivckovic2017graph}
{\sc Velickovic, P., Cucurull, G., Casanova, A., Romero, A., Lio, P., and
  Bengio, Y.}
\newblock Graph attention networks.
\newblock In {\em ICLR\/} (2018).

\bibitem{vinyals2016matching}
{\sc Vinyals, O., Blundell, C., Lillicrap, T., Wierstra, D., et~al.}
\newblock Matching networks for one shot learning.
\newblock In {\em NeurIPS\/} (2016).

\bibitem{wang2020unifying}
{\sc Wang, H., and Leskovec, J.}
\newblock Unifying graph convolutional neural networks and label propagation.
\newblock {\em arXiv:2002.06755\/} (2020).

\bibitem{wang2020microsoft}
{\sc Wang, K., Shen, Z., Huang, C., Wu, C.-H., Dong, Y., and Kanakia, A.}
\newblock Microsoft academic graph: When experts are not enough.
\newblock {\em Quantitative Science Studies\/} (2020).

\bibitem{wang21AMM}
{\sc Wang, N., Luo, M., Ding, K., Zhang, L., Li, J., and Zheng, Q.}
\newblock Graph few-shot learning with attribute matching.
\newblock In {\em CIKM\/} (2020).

\bibitem{wang2022task}
{\sc Wang, S., Ding, K., Zhang, C., Chen, C., and Li, J.}
\newblock Task-adaptive few-shot node classification.
\newblock In {\em SIGKDD\/} (2022).

\bibitem{wang2022faith}
{\sc Wang, S., Dong, Y., Huang, X., Chen, C., and Li, J.}
\newblock Faith: Few-shot graph classification with hierarchical task graphs.
\newblock In {\em IJCAI\/} (2022).

\bibitem{wang2021reform}
{\sc Wang, S., Huang, X., Chen, C., Wu, L., and Li, J.}
\newblock Reform: Error-aware few-shot knowledge graph completion.
\newblock In {\em CIKM\/} (2021).

\bibitem{wu2019simplifying}
{\sc Wu, F., Souza, A., Zhang, T., Fifty, C., Yu, T., and Weinberger, K.}
\newblock Simplifying graph convolutional networks.
\newblock In {\em ICML\/} (2019).

\bibitem{wu2020comprehensive}
{\sc Wu, Z., Pan, S., Chen, F., Long, G., Zhang, C., and Philip, S.~Y.}
\newblock A comprehensive survey on graph neural networks.
\newblock {\em IEEE TNNLS\/} (2020).

\bibitem{xiong2018one}
{\sc Xiong, W., Yu, M., Chang, S., Guo, X., and Wang, W.~Y.}
\newblock One-shot relational learning for knowledge graphs.
\newblock In {\em EMNLP\/} (2018).

\bibitem{xu2018powerful}
{\sc Xu, K., Hu, W., Leskovec, J., and Jegelka, S.}
\newblock How powerful are graph neural networks?
\newblock In {\em ICLR\/} (2019).

\bibitem{xu2018representation}
{\sc Xu, K., Li, C., Tian, Y., Sonobe, T., Kawarabayashi, K.-i., and Jegelka,
  S.}
\newblock Representation learning on graphs with jumping knowledge networks.
\newblock In {\em ICML\/} (2018).

\bibitem{ying2018graph}
{\sc Ying, R., He, R., Chen, K., Eksombatchai, P., Hamilton, W.~L., and
  Leskovec, J.}
\newblock Graph convolutional neural networks for web-scale recommender
  systems.
\newblock In {\em SIGKDD\/} (2018).

\bibitem{zeng2019graphsaint}
{\sc Zeng, H., Zhou, H., Srivastava, A., Kannan, R., and Prasanna, V.}
\newblock Graphsaint: Graph sampling based inductive learning method.
\newblock In {\em ICLR\/} (2019).

\bibitem{zhang2020few}
{\sc Zhang, C., Yao, H., Huang, C., Jiang, M., Li, Z., and Chawla, N.~V.}
\newblock Few-shot knowledge graph completion.
\newblock In {\em AAAI\/} (2019).

\bibitem{zhou2019meta}
{\sc Zhou, F., Cao, C., Zhang, K., Trajcevski, G., Zhong, T., and Geng, J.}
\newblock Meta-gnn: On few-shot node classification in graph meta-learning.
\newblock In {\em CIKM\/} (2019).

\bibitem{zitnik2017predicting}
{\sc Zitnik, M., and Leskovec, J.}
\newblock Predicting multicellular function through multi-layer tissue
  networks.
\newblock {\em Bioinformatics\/} (2017).

\end{thebibliography}
\end{document}